\newtheorem{theorem}{Theorem}[]
\newtheorem{lemma}[]{Lemma}
\newtheorem{proposition}{Proposition}[]
\newtheorem{assumption}{Assumption}
\newtheorem{remark}{Remark}
 \newcommand\subparagraph{}
\def\BibTeX{{\rm B\kern-.05em{\sc i\kern-.025em b}\kern-.08em
    T\kern-.1667em\lower.7ex\hbox{E}\kern-.125emX}}
\begin{document}
%
% paper title
% can use linebreaks \\ within to get better formatting as desired
%\title{IEEE Journal Template Example}
%\title{Optimal Radio Access Technology Selection Algorithm for LTE-WiFi Network \IEEEauthorrefmark{2}\thanks{\IEEEauthorrefmark{2}This paper is a substantially expanded and revised version of the work in \cite{q}.}}
\title {Online Reinforcement Learning of Optimal Threshold Policies for Markov Decision Processes  \thanks{This paper is a substantially 
expanded and revised version of the work in \cite{roy2019structure}.}}
%
%
% author names and IEEE memberships
% note positions of commas and nonbreaking spaces ( ~ ) LaTeX will not break
% a structure at a ~ so this keeps an author's name from being broken across
% two lines.
% use \thanks{} to gain access to the first footnote area
% a separate \thanks must be used for each paragraph as LaTeX2e's \thanks
% was not built to handle multiple paragraphs
%
%\markboth{IEEE TRANSACTIONS ON AUTOMATIC CONTROL}}
%{Author \MakeLowercase{\textit{et al.}}: Preparation of Brief Papers for IEEE TRANSACTIONS and JOURNALS (February 2017)}
\author{Arghyadip~Roy,~\IEEEmembership{Member,~IEEE,}
        Vivek~Borkar,~\IEEEmembership{Fellow,~IEEE,}
        Abhay~Karandikar,~\IEEEmembership{Member,~IEEE}% <-this % stops a space
        ~and Prasanna~Chaporkar,~\IEEEmembership{Member,~IEEE}
%\thanks{Copyright (c) 2015 IEEE. Personal use of this material is permitted. 
%However, permission to use this material for any other purposes must be obtained from the IEEE by sending a request to pubs-permissions@ieee.org.}
\thanks{Arghyadip Roy was with Department
of Electrical Engineering, Indian Institute of Technology Bombay,
Mumbai, 400076, India when the work was done and is currently with Coordinated Science Laboratory, University of Illinois at Urbana-Champaign, 61820, USA. 
e-mail: arghyad4@illionois.edu.
Vivek Borkar, Abhay Karandikar and Prasanna Chaporkar are with the Department
of Electrical Engineering, Indian Institute of Technology Bombay. e-mail: {$\lbrace$borkar,karandi,chaporkar$\rbrace$}@ee.iitb.ac.in.
Abhay Karandikar is currently Director, Indian Institute of Technology Kanpur (on leave from IIT Bombay), Kanpur, 208016, India. e-mail:karandi@iitk.ac.in.}}% <-this % stops a space
\maketitle

\begin{abstract}
%Dynamic Programming (DP) methods to solve Markov Decision Process (MDP) problems can be solved using Dynamic Programming (DP) methods which suffer from the \textit{curse of dimensionality}
%and the \textit{curse of modeling}. 
To overcome the \textit{curses of dimensionality and modeling}
%and the \textit{curse of modeling} 
of Dynamic Programming (DP) methods to solve Markov Decision Process (MDP) problems,
%these issues, 
Reinforcement Learning (RL) methods are adopted in practice.  
Contrary to traditional RL algorithms
which do not consider the structural properties of the optimal policy, we propose a structure-aware learning algorithm to 
exploit the ordered multi-threshold structure of the optimal policy, if any. We prove the asymptotic convergence of the proposed algorithm to the optimal policy. Due to the reduction 
in the policy space, the proposed algorithm provides remarkable improvements in storage and computational complexities over classical
RL algorithms. 
%\textcolor{blue}{We illustrate this using an example from queuing theory.}
%In this paper, we aim to 
%obtain 
%the optimal admission control policy in a system where different classes of customers are present.
%Using DP techniques, we prove that it is optimal to admit the $i^{\rm{th}}$  class of customers only upto a threshold $\tau(i)$ which is a non-increasing function of $i$.
Simulation results establish that the proposed algorithm converges faster than other RL algorithms. 
%The techniques 
%presented in the paper can be applied to any general MDP problem covering various applications such as inventory management, financial planning and 
%communication networking.
\end{abstract}

% IEEEtran.cls defaults to using nonbold math in the Abstract.
% This preserves the distinction between vectors and scalars. However,
% if the journal you are submitting to favors bold math in the abstract,
% then you can use LaTeX's standard command \boldmath at the very start
% of the abstract to achieve this. Many IEEE journals frown on math
% in the abstract anyway.

% Note that keywords are not normally used for peerreview papers.
\begin{IEEEkeywords}
Markov Decision Process, Stochastic Approximation Algorithms, Reinforcement
Learning, Stochastic Control, Online Learning of Threshold Policies.
\end{IEEEkeywords}

% For peer review papers, you can put extra information on the cover
% page as needed:
% \ifCLASSOPTIONpeerreview
% \begin{center} \bfseries EDICS Category: 3-BBND \end{center}
% \fi
%
% For peerreview papers, this IEEEtran command inserts a page break and
% creates the second title. It will be ignored for other modes.
\IEEEpeerreviewmaketitle

\section{Introduction}\label{sec:intro}
Markov Decision Process (MDP)\cite{puterman2014markov} is a framework which is widely used for the optimization of stochastic systems (e.g., queue \cite{ccil2009effects}, inventory and production management) %frequently arising in telecommunication systems,
%management)
%involving uncertainty
%MDP is used to model stochastic systems which need 
to make optimal temporal decisions. 
%An MDP is a controlled 
%stochastic process which operates on a state space
%. Each state is associated with 
%and a control process of \textit{action}. 
%An MDP is a controlled
%stochastic process which
%satisfies the \textit{controlled Markov property}, i.e., the transition from one \textit{state} to another (generating certain reward) is only governed by the current state-\textit{action} pair
%and is independent of the past history of the system. 
%Each transition gives rise to a certain amount of reward which depends on the current state and action. 
%A \textit{stationary policy} is a mapping from a state to an action describing which action is to be chosen in a state. 
%The objective of the MDP problem considered here 
%is to determine the optimal \textit{policy} to maximize the average expected reward. We know that it suffices to consider only \textit{stationary policies} for this problem.
%MDP has been extensively used to model problems related to queue management \cite{ccil2009effects} frequently arising in telecommunication systems, inventory
%management and production management.
%A generalized framework for MDP based modeling in the context of queue management is provided in \cite{ccil2009effects}. 
%In \cite{ccil2009effects},
%investigate the 
%structural properties of the optimal policy are analyzed using Dynamic Programming (DP) \cite{puterman2014markov} methods. 
%and study the impact of various system parameters on the 
%structural properties.
%DP techniques for the computation of optimal policy suffer from the following major drawbacks. First, 
Dynamic Programming (DP) methods \cite{puterman2014markov} to compute the \textit{optimal policy} suffer from the \textit{curse of dimensionality}\cite[Chapter~4.1]{powell2007approximate},\cite{bellman1957dynamic} in many practical applications 
%such as value iteration and 
%policy iteration algorithms \cite{puterman2014markov} 
as they are computationally 
%inefficient in the face of large state and action spaces 
inconvenient due to  extremely high dimension of the iterates.
%This is known as 
%the \textit{curse of dimensionality}.
Furthermore, they suffer from the \textit{curse of modeling} since the knowledge of the underlying transition probabilities
(which often depend on the statistics of unknown system parameters) required by DP methods, may not be available beforehand. 
%In reality, 
%It may be hard to gather these 
%statistics beforehand. This is known as the \textit{curse of modeling}.
RL techniques \cite{sutton1998reinforcement} address the 
curse of modeling by learning the optimal policy iteratively. 
%\textcolor{blue}{Both}
%model-free \textcolor{blue}{and model-based}
RL algorithms do not require any prior knowledge regarding the transition probabilities of the underlying model. 
RL being sampling based, updates only one component at a time, reducing per iterate computation at the expense of speed. Examples are:
Q-learning \cite{watkins1992q} %is a very popular RL algorithm which 
iteratively evaluates the Q-function of every state-action 
pair using a combination of \textit{exploration} and \textit{exploitation}. 
%Therefore, during 
%initial iterations, Q-learning explores various actions. As the number of iterations increases, it exploits the action with the 
%highest Q-value with higher probability. 
%Since we need to store the Q-function of every state-action pair, 
%However, the storage complexity of Q-learning is very high, especially under large state and action 
%spaces. %Furthermore, the presence of exploration mechanism makes the convergence behavior of Q-learning slow for practical purposes.
In \cite{jin2018q,wei2019model,jafarnia2020model}, upper confidence bound based exploration improves the convergence speed over classical $\epsilon$-greedy exploration.
%The finite horizon arguments in \cite{jin2018q} are extended to infinite horizon MDP in \cite{wei2019model,jafarnia2020model}.}
PDS learning algorithm \cite{powell2007approximate,salodkar2008line}
%addresses these issues
%by 
removes the requirement of action exploration
%.
%Due to the absence of exploration, 
and results in faster convergence than 
%the convergence rate of PDS learning 
%is faster than that of 
Q-learning. 
%Moreover, it has a lesser storage complexity than Q-learning 
%of PDS learning is lesser than that of Q-learning since 
%we no longer need to store the Q functions associated with the state-action pairs. 
%since we need to store only the value functions of the states instead of Q functions of the state-action pairs. 
Virtual Experience (VE) learning algorithm in \cite{mastronarde2012joint} updates multiple PDSs at a time. Faster convergence is achieved at the cost of increased computational complexity.

However, popular RL techniques \cite{watkins1992q,sutton1998reinforcement,borkar2005actor,powell2007approximate,salodkar2008line,bertsekas1995dynamic}
%such as Q-learning \cite{watkins1992q}, %Temporal Difference learning \cite{sutton1998reinforcement}, 
%policy gradient \cite{sutton2000policy},
%actor-critic learning \cite{borkar2005actor} and Post-Decision State (PDS) learning \cite{powell2007approximate,salodkar2008line} %suffer from the shortcoming 
%that they 
do not exploit the known structural properties of the optimal policy if any, and
%, within the learning framework. 
%When these schemes iteratively learn the optimal policy 
%by trial and error, 
%While learning iteratively, the policy search space consists of an 
consider the set of all policies as the policy search space. However, in operations research and communications literature,
%reveals that in many 
%cases of 
%practical scenarios,
various structural properties of the optimal policy including threshold structure, transience of certain states 
%of the underlying Markov chain 
and index rules 
\cite{agarwal2008structural,smith2002structural} are often established
%interest, 
%value functions of states satisfy properties like
using monotonicity, convexity/concavity and sub-modularity/super-modularity properties of value functions of states.
%These results are often exploited to prove various structural properties of the optimal policy including threshold structure, transience of certain states of the underlying Markov chain and index rules \cite{agarwal2008structural,smith2002structural}. 
%Moreover, transience of certain states of the underlying Markov chain can be proved using these
%properties \cite{agarwal2008structural,smith2002structural}. 
%In the learning framework,
If one can exploit these structural 
properties to reduce the search space while learning,
then faster convergence can be achieved with a reduction in the computational complexity. 
A few works in the literature \cite{kunnumkal2008exploiting,fu2012structure,ngo2010monotonicity,sharma2018accelerated} focus on the exploitation of the structural properties \cite{smith2002structural} while learning the optimal policy. 
%This leads to improved convergence %can be achieved 
%due to reduction in the dimensionality of the effective policy space. 
Q-learning based approaches in \cite{kunnumkal2008exploiting,ngo2010monotonicity}, in every iteration, %where in every iteration, the 
project the value functions to guarantee the monotonicity in system state. 
%Similar methodologies are adopted in \cite{ngo2010monotonicity}.
Although improved convergence speed is obtained, the per-iteration computational complexity does not improve over Q-learning.  In \cite{fu2012structure}, a %learning algorithm which uses 
%a
scheme based on piecewise linear approximation of the value function, is proposed.
However, as the approximation becomes better, the complexity increases. In \cite{chakravorty2019remote,subramanian2019renewal}, a Stochastic Approximation (SA)\cite{borkar2008stochastic} approach based on simultaneous perturbation is proposed to compute the optimal thresholds. It uses a combination of ideas from renewal theory and Monte Carlo simulation. 
The low complexity Q-learning algorithm  proposed in \cite{liu2020sampled} exploits the threshold nature of the optimal policy. The proposed algorithm estimates the optimal policy for a subset of sets, and then policy interpolation is performed for the unvisited states. The performance of policy interpolation is further improved by a policy refinement. However, the proposed algorithm provides a near-optimal  policy  after  a  finite  number  of visits  to  a  set  of  state-action pairs.
\par

%To illustrate the benefit provided by the awareness of the structural properties in the context of RL, we consider the following scenario. 

%To illustrate the benefit provided by the awareness of the structural properties while learning, 
We consider a scenario where the optimal policy has a multi-threshold structure, and the thresholds for different events are ordered. Therefore, learning the optimal policy is equivalent to learning the value of threshold for each event.
Motivated by this, we propose a Structure-Aware Learning for MUltiple Thresholds 
(SALMUT) algorithm which considers only the set of ordered threshold policies.
%where the values of thresholds are ordered, i.e.,
%$\tau(i)\ge \tau(j), \forall i<j$. 
We consider a two timescale approach 
%In the faster timescale,
where the value functions of the states and threshold parameters are updated in faster and slower timescale, respectively. Threshold parameters are updated 
based on the gradients of the average reward w.r.t the threshold.
%In the slower timescale,
%we update the values of the threshold parameters based on the gradients of the average reward w.r.t the threshold. We establish that this scheme results 
The proposed scheme results
in reductions in storage and computational complexities (by amortizing the computation, and hence the complexity of the original problem, over several iterations) compared to traditional RL schemes.
%Since the search space
%of policies is smaller, 
%SALMUT converges faster than classical RL techniques. 
We prove that the proposed algorithm converges to the optimal policy
asymptotically.
%in an asymptotic sense.
Simulation results 
%are presented to 
exhibit that SALMUT converges faster than classical RL schemes due to reduction in policy search space. %Note that the techniques presented 
%in this paper 
%are of independent interest and 
%can be employed to learn the optimal policy in a large set of optimization problems where the
The techniques in this paper can be employed
to learn the optimal policy in problems
involving optimality of threshold 
policies \cite{agarwal2008structural, sinha2012optimal,koole1998structural,brouns2006optimal,
 ngo2009optimality}.
% We provide a generic framework in this paper for learning a set of threshold parameters.
%As an example, we 
We illustrate this using an example of
%aim to determine the optimal admission control policy in a 
a finite buffer multi-server queue with multiple customer classes (similar to \cite{ccil2009effects,ccil2007structural}).
%are present.
%We aim to determine the optimal admission control policy which maximizes the average expected reward over \textit{infinite horizon}.
%This model 
%presented in this paper 
%is motivated from \cite{ccil2009effects} which considers the infinite buffer case. A similar model for batch arrival is considered in \cite{ccil2007structural}.
%We prove the existence of a threshold-based optimal policy using DP methods under certain assumptions on the reward function. 
%We prove that the optimal  policy has a oredered multi-threshold structure.
\par
%We prove that it is optimal to admit the $i^{\rm{th}}$  class of customer only upto a threshold $\tau(i)$ which is a non-increasing function of $i$.}\par
%Therefore, learning the optimal policy is equivalent to learning the value of threshold for each value of $i$.}
%In many cases, 
%of practical interest, 
%instead of a set of thresholds, 
%we may need to learn a single threshold only, see e.g., our preliminary work in \cite{roy2019structure}. 
The proposed SALMUT algorithm can be adopted  for single threshold case in \cite{roy2019structure}, without any 
modification. In \cite{roy2019low}, a structure-aware learning algorithm learns a single parameterized threshold where the thresholds for different parameter values are independent of each other. However, in this paper, the thresholds have to satisfy certain ordering constraints and hence, 
%the thresholds 
are not independent. Therefore, the threshold update scheme in the slower timescale and corresponding convergence behavior differ significantly from \cite{roy2019low}.
To the best of our knowledge, contrary to other works 
%in the literature
\cite{kunnumkal2008exploiting,fu2012structure,ngo2010monotonicity,sharma2018accelerated}, we for the first time consider the threshold vector as a parameter while learning, to reduce a non-linear iteration (involving maximization over a set of actions) into a linear iteration for a quasi-static value of threshold and thereby, achieve a significant reduction in the per-iteration computational complexity.
\section{System Model \& Problem Formulation}\label{sec:sysmod}
Consider a continuous time MDP problem where we aim to obtain the optimal control policy in a multi-event scenario.
%\subsection{State \& Action Space}
%\subsection{\textcolor{blue}{System Model}}
Let the system state in the state space ($\mathcal{S}\times \mathcal{I}$, say) be $(s,i)$
where $s\in\{0,1,\ldots,W\}$, %denotes the total number of customers in a queue or inventory level of an item 
and $i\in \{0,1,\ldots,N\}$ denotes the event type.
%(arrival/departure of different classes of customers). 
%Note that the transitions of $\{X(t)\}_{t \ge 0}$ 
%happen only at the decision epochs. 
Since we have a finite-state regular Markov chain \cite{wolff1989stochastic},
%In other words, the rate of exponentially distributed sojourn times in each state is bounded.
%Therefore, 
it is sufficient to observe the system state only at the decision epochs \cite{kumar2012discrete}. 
%The system state need not be observed at other time points.
Let the action space $\mathcal{A}$
consist of two actions, viz., $A_1$, and 
$A_2$.
%which
%$A_1$ and $A_2$ 
%may correspond to admission of an arriving user or satisfaction of a demand and blocking of a user/demand, respectively.}
%\subsection{State Transitions and Rewards}
%Based on the current system state $(s,i)$ and the chosen action $a$, the system moves to state $(s',i')$ with a positive probability. 
The transition probability from state $(s,i)$ to $(s',i')$ under action $a$ ($p((s,i),(s',i'),a)$, say) can be factored into two parts, viz., the deterministic and the probabilistic transitions due to the chosen action ($p(s,s',a)$, say) and 
%the probabilistic transition due to 
the next event, respectively.
%Let the transition probability due to chosen action $a$ be denoted by 
%$p(s,s',a)$. 
Let the mean transition rate from state $(s,i)$ be denoted by $v(s)$.
%Therefore,
%\begin{equation*}
%    v(s)=\sum_{i=1}^N \lambda_i+\min\{s,m\}\mu.
%\end{equation*}
%Let $\lambda_0(s)=\min\{s,m\}\mu$ and 
%$\lambda_i(s)=\lambda_i, \forall i\neq 0$. 
Let the arrival times of events in state $s$ be independent exponentially distributed with means $\lambda_i(s)$ respectively, so that the next event is $i$ with probability $\frac{\lambda_i(s)}{\sum_{j=0}^N\lambda_j(s)}$.\\
%Let event $i$ in state $s$ be a Poisson process
%with mean $\lambda_i(s)$.
%Note that although $\lambda_0(s),\ldots,\lambda_N(s)$ may not depend on $s$, these notations are introduced for the ease of representation. 
Now, 
%\begin{equation*}
    $v(s)=\sum_{i=0}^N \lambda_i(s)$.
%\end{equation*}
Hence,
%the transition probability from state $(s,i)$ to state $(s',i')$ ($p((s,i),(s',i'),a)$, say) is expressed as 
\begin{equation*}
    p((s,i),(s',i'),a)=p(s,s',a)\frac{\lambda_{i'}(s')}{v(s')}.
\end{equation*}
%Then,
% \begin{equation*}
 %   p(s,s',a)=
 % \begin{cases}
 %    \frac{\sum_{i=1}^N \lambda_i}{v(s)}, &s'=\hat{s},\\ \frac{\min\{s,m\}\mu}{v(s)},& s'=\hat{s}-1,
 %\end{cases}
 %\end{equation*}
 %where $\hat{s}=s$, $\hat{s}=s+1$ and $\hat{s}=s-1$ in case of an arrival with chosen action $A_1$, an arrival with chosen action $A_2$ and a departure, respectively.
% \subsection{Reward and Cost}
% Based on the system state $(s,i)\in \mathcal{S}\times\mathcal{I}$ and the chosen action $a\in \mathcal{A}$,
% finite amounts of reward rate ($r((s,i),a)$, say) and cost rate are obtained.
Let the non-negative reward rate ($r((s,i),a)$, say) obtained by choosing action $A_2$ for event $i$ be $R_i$, where 
$R_i>R_j$ for $i<j$. 
Therefore,
$r((s,i),a)=
  R_i\mathbbm{1}_{\{a=A_2\}}$. % $
%\begin{equation*}
%r((s,i),a)=\begin{cases}
%  R_i\mathbbm{1}_{a=A_2}, &  a=A_2,\\% E_l=(E_1||E_3) \text{ and } a=A_1, \\
%  0, & \text{else}.\\
%  \end{cases}
%\end{equation*}
  Let the non-negative cost rate in state $(s,i)$ be $h(s)$ (independent of $i$). %where $h(s)$ and $h(s+1)-h(s)$ are non-decreasing functions of $s$ (convex increasing in the discrete domain),
\par 
%At each \textcolor{blue}{event epoch, the system chooses an action. We aim to obtain the policy which maximizes the average expected reward of the system over infinite horizon.}
%arrival instant, the system either admits or rejects the incoming customers.
%We first obtain the optimal admission control policy which maximizes the average expected reward of the system over infinite horizon. 
%This problem can be formulated as a continuous time MDP problem.
Let $\mathcal{Q}$ be the set of stationary policies, 
%(decision rule at time $t$ depends only on the system state at time $t$ and not on the past history). 
Since the zero state is reachable from any state with positive probability, the underlying Markov chain is unichain and hence, a unique stationary distribution exists. Let the infinite horizon average reward (independent of the initial state) under policy $Q\in \mathcal{Q}$ be ${\rho}_Q$.
We aim to maximize
%\begin{equation}\label{eq:max}
   $\rho_Q=\lim_{t\to \infty}\frac{1}{t}\mathbb{E}_Q [R(t)]$,
%\end{equation}
where $R(t)$ is the total reward till time $t$.
%and $\mathbb{E}_Q$ is the expectation operator under policy $Q$. For a stationary policy, the limit in  (\ref{eq:max}) exists.
The DP equation describing the necessary condition for optimality in a semi-Markov decision process ($\forall \{(s,i),(s',i')\} \in \mathcal{S}\times \mathcal{I}$) %and $(s',i')\in \mathcal{S}\times \mathcal{I}$) 
is 
\begin{equation*}
\begin{split}
    \bar{V}(s,i)=&\max_{a \in \mathcal{A}}[r((s,i),a)+\sum_{s',i'}p((s,i),(s',i'),a)\bar{V}(s',i')\\&-\rho\bar{\beta}((s,i),a)]-h(s),
    \end{split}
\end{equation*}
where $\bar{V}(s,i)$, $\rho$ and $\bar{\beta}((s,i),a)$ denote 
the value function of state $(s,i)$, the optimal average reward and the mean transition time from state $(s,i)$ upon choosing action $a$, respectively. We rewrite the DP equation after substituting the values of $r((s,i),a)$ and transition probabilities as 
\begin{equation*}
\begin{split}
\bar{V}(s,i)=&\max_{a \in \mathcal{A}}\big[R_i\mathbbm{1}_{\{a=A_2\}}+\sum_{s',i'}p(s,s',a)\frac{\lambda_{i'}(s')}{v(s')}\bar{V}(s',i')\\&-\rho\bar{\beta}((s,i),a)\big]-h(s).    
\end{split}
\end{equation*}
%where $R_0=0$ corresponds to a departure event.
We define $V(s):=\sum_{i=0}^N \frac{\lambda_i(s)}{v(s)}\bar{V}(s,i).$
Therefore, the following relations hold.
\begin{equation*}
\begin{split}
\bar{V}(s,i)=&\max_{a \in \mathcal{A}}\big[R_i\mathbbm{1}_{\{a=A_2\}}+\sum_{s'}p(s,s',a)V(s')\\& -\rho\bar{\beta}((s,i),a)\big]-h(s),
\end{split}
\end{equation*}
and (since $\frac{\lambda_i(s)}{v(s)}$ is independent of $a$)
\begin{equation}\label{eq:reduction}
\begin{split}
V(s)=&\max_{a \in \mathcal{A}}\big[\sum_i \frac{\lambda_{i}(s)}{v(s)} R_i\mathbbm{1}_{\{a=A_2\}}+\sum_{s'}p(s,s',a)V(s')\\&-\rho\bar{\beta}(s,a)\big]-h(s),
\end{split}
\end{equation}
where $\bar{\beta}(s,a)=\sum\limits_{i}\frac{\lambda_i(s)}{v(s)}\bar{\beta}((s,i),a)$.
Using (\ref{eq:reduction}), instead of considering  $(s,i)\in \mathcal{S}\times \mathcal{I}$, we can consider the system state as $s \in \mathcal{S}$ with value function $V(s)$ and transition probability $p(s,s',a)$, and the analysis remains unaffected. However, in this model, the reward rate is the weighted average of original reward rates. %for different events in the original model. The probability of the event acts as the corresponding weight.
The sojourn times being exponentially distributed, %this converts into a continuous time controlled Markov chain. The resulting optimality equation is as follows.
%\begin{equation}\label{eq:discrete}
%    0=\max_{a\in \mathcal{A}}[\sum_i \frac{\lambda_{i}(s)}{v(s)} R_i\mathbbm{1}_{\{a=A_2\}}+\sum_{s'}q(s,s',a)V(s')]-h(s)-\rho,
%\end{equation}
%where $q(s,s',a)$ are controlled transition rates 
%which satisfy $q(s,s',a)\ge 0$ (for $s'\neq s$)
%and $\sum \limits_{s'}q(s,s',a)=0$. Note that Equation (\ref{eq:discrete}) follows directly from the Poisson equation \cite{marbach2001simulation}.
%Scaling the transition rates by a positive quantity is equivalent to time scaling. This operation scales the average reward for every policy including the optimal one, however, without changing the optimal policy. Therefore, we assume 
%$q(s,s,a) \in (-1,0), \forall a$ without loss of generality. This implies that $q(s,s',a)\in [0,1]$ for $s'\neq s$. 
following \cite{puterman2014markov}, we obtain 
\begin{equation}\label{eq:wlog}
\begin{split}
    V(s)=&\max_{a \in \mathcal{A}}[\sum_i \frac{\lambda_{i}(s)}{v(s)} R_i\mathbbm{1}_{\{a=A_2\}}+\sum_{s'}p(s,s',a)V(s')]\\&-h(s)-\rho,
    \end{split}
\end{equation}
where $p(s,s,a)=1-\sum_{s'\neq s} p(s,s',a)$.
%by adding $V(s)$ to both sides of Equation (\ref{eq:discrete}). Here, $p(s,s',a)=q(s,s',a)$ for $s'\neq s$ and $p(s,s,a)=1+q(s,s,a)$. 
 (\ref{eq:wlog}) is the DP equation for an equivalent discrete time MDP having controlled transition probabilities $p(s,s',a)$ which is used throughout the rest of the paper. 
This problem can be solved using Relative Value Iteration Algorithm (RVIA) as follows.
%according to the following iterative scheme.
\begin{equation}\label{eq:RVIA}
\begin{split}
   V_{n+1}(s)=&\max_{a\in \mathcal{A}}[\sum_i \frac{\lambda_{i}(s)}{v(s)} R_i\mathbbm{1}_{\{a=A_2\}}+\sum_{s'}p(s,s',a)V_n(s')]\\&-V_n(s^*)-h(s),
\end{split}
\end{equation}
where $s^*\in \mathcal{S}$ is a fixed state and $V_n(s)$ is the estimate of value function of state $s$ at $n^{\rm {th}}$ iteration.

\section{Exploitation of structural properties in RL}\label{sec:salmut}
We assume that the optimal policy is of threshold-type where it is optimal to choose action $A_2$ only upto a threshold $\tau(i) \in \mathcal{S}$ which is a non-increasing function of $i$.
In this section, we propose an RL algorithm which exploits the knowledge regarding the existence of a threshold-based optimal policy.
% Traditional RL algorithms like Q-learning and PDS learning optimize over the entire policy space. In contrast, the proposed algorithm searches the optimal policy only from the set of threshold policies where the thresholds for different classes of customers are ordered. This results in significant reductions in convergence time, storage and computational complexities.
%\subsection{Gradient-based RL Technique}
\par Given that the optimal policy is threshold in nature where the optimal action changes from $A_2$ to $A_1$ at $\tau(i)$ for $i^{\rm{th}}$ event, the knowledge of $\tau(0), \ldots, \tau(N)$ uniquely characterizes the optimal policy. However, computation of these threshold parameters requires the knowledge of the event probabilities in state $s$ (governed by $\lambda_i(s)$). When the $\lambda_i(.)$s are unknown, then we can learn these ordered thresholds instead of learning the optimal policy from the set of all policies including the non-threshold policies. We devise an iterative update rule for a threshold vector of dimensionality $(N+1)$ 
so that the threshold vector iterate converges to the optimal threshold vector. \par
We consider the set of threshold policies where the thresholds for different events are ordered ($\tau(i)\ge \tau(j)$ for $i<j$) and represent them as policies parameterized by the threshold vector $\boldsymbol {\tau}=[\tau(0),\tau(1),\ldots, \tau(N)]^T$ where $\tau(0)\ge \tau(1)\ge\ldots \ge \tau(N)$. In this context, we redefine the notations associated with the MDP to reflect their dependence on $\boldsymbol{\tau}$. We intend to compute the gradient of the average reward  w.r.t $\boldsymbol{\tau}$ and improve the policy by updating $\boldsymbol{\tau}$ in the direction of the gradient.
%\subsection{Structure-aware Online RL Algorithm}
Let the transition probability from state $s$ to state $s'$ corresponding to the threshold vector $\boldsymbol{\tau}$ be $P_{ss'}(\boldsymbol{\tau})$.
Hence, 
%\begin{equation*}
 $P_{ss'}(\boldsymbol{\tau})=P(X_{n+1}=s'|X_n=s,\boldsymbol{\tau}).$
%\end{equation*}
Let the value function of state $s$, the average reward of the Markov chain and the stationary probability of state $s$ parameterized by  $\boldsymbol{\tau}$ be denoted by $V(s,\boldsymbol{\tau})$, $\sigma(\boldsymbol{\tau})$ and $\pi(s, \boldsymbol{\tau})$, respectively.

The optimal policy can be computed using  (\ref{eq:RVIA}) if we know the state transition probabilities and $\lambda_i(s)$s. When these parameters are unknown, theory of SA \cite{borkar2008stochastic} enables us to replace the expectation operation in  (\ref{eq:RVIA}) by averaging over time and still converge to the optimal policy. Let $a(n)$ be a positive step-size sequence satisfying 
%the following properties:
%\begin{equation}\label{eq:stepsize_a}
$\sum_{n=1}^\infty a(n)=\infty;
\sum_{n=1}^\infty (a(n))^2<\infty.$
%\end{equation}
Let $b(n)$ be another step-size sequence which apart from the above properties 
%in Equation (\ref{eq:stepsize_a}), 
satisfies
%\begin{equation}\label{eq:stepsize_b}
    $\lim \limits_{n\to \infty} \frac{b(n)}{a(n)}=0$.
%\end{equation}
%We adopt the following strategy in order to learn the optimal policy. 
We update the value function of the system state (based on the type of event) at any given iteration and keep the value functions of other states unchanged. Let $S_n$ be the state of the system at $n^{\rm {th}}$ iteration. %Let the number of times state $s$ is updated till $n^{\rm {th}}$ iteration be denoted by $\gamma(s,n)$. Therefore,
%\begin{equation*}
Let
    $\gamma(s,n)=\sum _{x=1}^n \mathbbm{1}_{\{S_x=s\}}.$
%\end{equation*}
For a fixed threshold vector $\boldsymbol{\tau}$ (i.e., a fixed policy), $\max$ operator in (\ref{eq:wlog}) goes away, and the resulting system becomes a linear system. Therefore we have
\begin{equation*}
\begin{split}
    V(s)=&\sum_i \frac{\lambda_{i}(s)}{v(s)} R_i\mathbbm{1}_{\{l(s,\tau(i))=A_2\}}+\sum_{s'}p(s,s',a)V(s')\\&-h(s)-\rho,
    \end{split}
\end{equation*}
where $l(s,\tau(i))=A_2$ if $s<\tau(i)$, $l(s,\tau(i))=A_1$ else. 

The update of value function of state $s$ (corresponding to the $i^{\rm {th}}$ event) is done using the following scheme:
\begin{equation}\label{eq:primal}
\begin{split}
 V_{n+1}(s,\boldsymbol{\tau})=&(1-a(\gamma(s,n)))V_n(s,\boldsymbol{\tau})+a(\gamma(s,n))[-h(s)+\\&R_i\mathbbm{1}_{\{l(s,\tau(i))=A_2\}}+V_n(s',\boldsymbol{\tau})-V_n(s^*,\boldsymbol{\tau})],\\
V_{n+1}(\tilde{s},\boldsymbol{\tau})&=V_{n}(\tilde{s},\boldsymbol{\tau}), \forall \tilde{s}\neq s,
\end{split}
\end{equation}
where $V_n(s,\boldsymbol{\tau})$ denotes the value function of state $s$ at $n^{\rm {th}}$ iteration provided the threshold vector is $\boldsymbol{\tau}$. This is known as the primal RVIA which is performed in the faster timescale. 
%\begin{remark}
%Note that according to the proposed scheme (\ref{eq:primal}), for a fixed threshold policy,
%the $\max$ operator in Equation (\ref{eq:RVIA}) goes away, and the resulting system becomes a linear system.
%\end{remark}
The scheme (\ref{eq:primal}) works for a fixed value of threshold vector. To obtain the optimal value of $\boldsymbol{\tau}$, the threshold vector needs to be iterated in a slower timescale $b(.)$.
The idea is to learn the optimal threshold vector by computing $\nabla \sigma(\boldsymbol{\tau})$ %based on the current value of the threshold 
and update the value of threshold in the direction of the gradient. This scheme is similar to stochastic gradient routine:
\begin{equation}\label{eq:nabla_2}
    \boldsymbol{\tau}_{n+1}=\boldsymbol{\tau}_n+b(n)\nabla \sigma(\boldsymbol{\tau}_n),
\end{equation}
where $\boldsymbol{\tau}_n$ is the threshold vector at $n^{\rm {th}}$ iteration. %Equations (\ref{eq:stepsize_a})
%and (\ref{eq:stepsize_b})
Conditions on step sizes ensure that the value function and threshold vector iterates are updated in different timescales. From the slower timescale, the value functions seem to be quasi-equilibrated, whereas form the faster timescale, the threshold vector appears to be quasi-static (known as the ``leader-follower" behavior). \par 
%Let $\boldsymbol{\tau}(i)$ denote the $i^{\rm {th}} (i=1,2,\ldots,N)$ component of the threshold vector $\boldsymbol{\tau}$.
Given a threshold vector $\boldsymbol{\tau}$, it is assumed that the transition from state $s$ for  $i^{\rm {th}}$ event is driven by the rule $P_1(s'|s)$ if $s<\tau(i)$ and by the rule $P_0(s'|s)$ otherwise. Under rule $P_1(s'|s)$, the system moves from state $s$ to state $s'=s+1$ following action $A_2$ if $s<\tau(i)$. On the other hand, rule $P_0(s'|s)$ dictates that the system remains in state $s$ following action $A_1$. For a fixed $\boldsymbol{\tau}$,  (\ref{eq:primal}) is updated using the above rule. Let $g_s({\tau}(i))=R_i\mathbbm{1}_{\{l(s,\tau(i))=A_2\}}-h(s)$.
 The following assumption is made on $P_{ss'}(\boldsymbol{\tau})$ and $g_s(\boldsymbol{\tau})$ to embed the discrete parameter $\boldsymbol{\tau}$ into a continuous domain later.
\begin{assumption}\label{ass:1}
$P_{ss'}(\boldsymbol{\tau})$ and $g_s(\boldsymbol{\tau})$ are bounded and twice differentiable functions of $\boldsymbol{\tau}$. It has bounded first and second derivatives.
\end{assumption}
For a given event, the threshold policy chooses the rule $P_1(.|.)$ upto a threshold %on the state space 
and follows the rule $P_0(.|.)$, thereafter. Therefore the threshold policy is defined at discrete points and does not satisfy Assumption \ref{ass:1} as the derivative is undefined. To address this issue, we propose an approximation ($\approx$ interpolation to  continuous domain) of the threshold policy, which resembles a step function, so that the derivative exists at every point. This results in a randomized policy which in state $s$, chooses policies $P_0(s'|s)$ and $P_1(s'|s)$ with probabilities $f(s,\boldsymbol{\tau})$ and $1-f(s,\boldsymbol{\tau})$, respectively. In other words,
\begin{equation}\label{eq:appx}
P_{ss'}(\boldsymbol{\tau}) \approx P_0(s'|s)f(s,\boldsymbol{\tau})+P_1(s'|s)(1-f(s,\boldsymbol{\tau})).    
\end{equation}
Intuitively,  $f(s,\boldsymbol{\tau})$
%which is a function of the system state $s$ and the threshold vector $\boldsymbol{\tau}$, 
should allocate similar probabilities to $P_0(s'|s)$ and $P_1(s'|s)$ near the threshold. As we move away towards the left (right) direction, the probability of choosing $P_0(s'|s)$
($P_1(s'|s)$) should decrease. %Therefore $f(s,\boldsymbol{\tau})$ needs to be an increasing function of $s$. 
The following function is chosen as a convenient approximation as it is continuously differentiable and the derivative is non-zero at every point.
\begin{equation}\label{eq:sigmoid}
    f(s,{\tau}(i))=\frac{e^{(s-{\tau}(i)-0.5)}}{1+e^{(s-{\tau}(i)-0.5)}}.
\end{equation}
Similar to (\ref{eq:appx}), we approximate $g_s(\boldsymbol{\tau})$ as
\begin{equation*}
g_s({\tau}(i))\approx -f(s,{\tau}(i))h(s)+(1-f(s,{\tau}(i)))(R_i-h(s)).   
\end{equation*}
   
%\end{equation*}

\begin{remark}
Note that although the state space is discrete, individual threshold vector component iterates may take values in the continuous domain. However, only
an ordinal comparison dictates which action needs to be chosen in the current state.
\end{remark}
\begin{remark}
Instead of the sigmoid function in (\ref{eq:sigmoid}), the function $f(s,{\tau}(i))=1.\mathbbm{1}_{\{s\ge {\tau}(i)+1\}}
+(s-{\tau}(i)) \mathbbm{1}_{\{s< {\tau}(i)<s+1\}}$  which uses approximation only when $s< {\tau}(i)<s+1$, could have been chosen.
%\begin{equation*}
%\begin{split}
%$   f(s,\boldsymbol{\tau}(i))=1.\mathbbm{1}_{\{s\ge \boldsymbol{\tau}(i)+1\}}
%+(s-\boldsymbol{\tau}(i)) \mathbbm{1}_{\{s< \boldsymbol{\tau}(i)<s+1\}}.$
%    \end{split}
%\end{equation*}
%Clearly, this function does not employ approximation except in the interval $s< \boldsymbol{\tau}(i)<s+1$,
This function suffers less approximation error than that of  (\ref{eq:sigmoid}). However, it may lead to slow convergence since the derivative of the function and hence the gradient becomes zero outside 
$s< {\tau}(i)<s+1$.
Although the derivative of sigmoid function decays exponentially fast too, we observe in simulations that the convergence behavior with (\ref{eq:sigmoid}) is better.
\end{remark}
Under Assumption \ref{ass:1}, the following proposition \cite[Proposition~1]{marbach2001simulation} provides a closed form expression for the gradient of $\sigma(\boldsymbol{\tau})$. 
%Note that 
%\cite{marbach2001simulation} considers a more general case where unlike here, the reward function also depends on $\boldsymbol{\tau}$.
%\textcolor{blue}{Let $g_s(\boldsymbol{\tau})=R_i\mathbbm{1}_{\{l(s,\tau(i))=A_2\}}-h(s)$.} 
The proposition stated next is extended to policy gradient theorem in \cite{sutton2000policy}.

\begin{proposition}
\begin{equation}\label{eq:nabla_1}
    \nabla \sigma(\boldsymbol{\tau})=\sum \limits_{s\in \mathcal{S}} \pi(s,\boldsymbol{\tau})(\nabla g_s(\boldsymbol{\tau})+\sum \limits_{s'\in \mathcal{S}}\nabla P_{ss'}(\boldsymbol{\tau})V(s',\boldsymbol{\tau})).
\end{equation}
%\textcolor{blue}{where $g_s(\boldsymbol{\tau})=R_i\mathbbm{1}_{\{l(s,\tau(i))=A_2}\}-h(s)$}.
\end{proposition}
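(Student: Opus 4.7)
The plan is to derive the formula from the Poisson (Bellman) equation for the average-reward Markov chain induced by the parameterized policy. For any fixed $\boldsymbol{\tau}$, the $\max$ operator disappears in (\ref{eq:wlog}) and, after the sigmoid smoothing (\ref{eq:sigmoid}), the system
\begin{equation*}
V(s,\boldsymbol{\tau}) + \sigma(\boldsymbol{\tau}) = g_s(\boldsymbol{\tau}) + \sum_{s'} P_{ss'}(\boldsymbol{\tau})\, V(s',\boldsymbol{\tau})
\end{equation*}
is a finite linear system in $V(\cdot,\boldsymbol{\tau})$ and $\sigma(\boldsymbol{\tau})$ whose coefficients are, by Assumption \ref{ass:1}, twice continuously differentiable in $\boldsymbol{\tau}$. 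First I would invoke the unichain property (and fix the reference state $s^\ast$) to pin down a unique solution, and then use standard smooth dependence of linear system solutions on parameters to conclude that $V(\cdot,\boldsymbol{\tau})$, $\sigma(\boldsymbol{\tau})$, and the stationary distribution $\pi(\cdot,\boldsymbol{\tau})$ are all differentiable in $\boldsymbol{\tau}$.

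The central step is to differentiate the Poisson equation component-wise in $\boldsymbol{\tau}$:
\begin{equation*}
\nabla V(s,\boldsymbol{\tau}) + \nabla \sigma(\boldsymbol{\tau}) = \nabla g_s(\boldsymbol{\tau}) + \sum_{s'} \nabla P_{ss'}(\boldsymbol{\tau})\, V(s',\boldsymbol{\tau}) + \sum_{s'} P_{ss'}(\boldsymbol{\tau})\, \nabla V(s',\boldsymbol{\tau}).
\end{equation*}
Next I would multiply both sides by $\pi(s,\boldsymbol{\tau})$ and sum over $s\in\mathcal{S}$. On the left, $\sum_s \pi(s,\boldsymbol{\tau})=1$ preserves $\nabla\sigma(\boldsymbol{\tau})$. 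The last term on the right equals $\sum_{s'}\bigl(\sum_s \pi(s,\boldsymbol{\tau}) P_{ss'}(\boldsymbol{\tau})\bigr)\nabla V(s',\boldsymbol{\tau})$; by the stationarity identity $\sum_s \pi(s,\boldsymbol{\tau}) P_{ss'}(\boldsymbol{\tau}) = \pi(s',\boldsymbol{\tau})$ this coincides with $\sum_s \pi(s,\boldsymbol{\tau})\nabla V(s,\boldsymbol{\tau})$ on the left and cancels. Rearranging yields exactly (\ref{eq:nabla_1}).

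The step I expect to be most delicate is the justification of differentiability of $V(\cdot,\boldsymbol{\tau})$ and $\pi(\cdot,\boldsymbol{\tau})$ in $\boldsymbol{\tau}$, together with the fact that the Poisson equation has a solution at all. This follows from (i) Assumption \ref{ass:1} giving smooth parameter dependence of $P(\boldsymbol{\tau})$ and $g(\boldsymbol{\tau})$, (ii) the unichain structure guaranteed by the reachability of the zero state (already invoked in Section \ref{sec:sysmod}) which ensures the group-inverse/fundamental matrix $(I-P(\boldsymbol{\tau})+\mathbf{1}\pi(\boldsymbol{\tau})^T)^{-1}$ exists and is smooth in $\boldsymbol{\tau}$, and (iii) the implicit function theorem applied to the linear system for $(V,\sigma)$ with the normalization $V(s^\ast,\boldsymbol{\tau})=0$. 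Once this regularity is in place, the algebraic cancellation above is immediate; the result is then exactly Proposition 1 of \cite{marbach2001simulation}, and the policy-gradient form of \cite{sutton2000policy} follows by rewriting $\nabla P_{ss'}(\boldsymbol{\tau}) = P_{ss'}(\boldsymbol{\tau})\,\nabla \log P_{ss'}(\boldsymbol{\tau})$ under the smoothing (\ref{eq:sigmoid}).
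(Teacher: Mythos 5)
Your argument is correct and is precisely the standard derivation: differentiate the Poisson equation for the parameterized chain, weight by the stationary distribution, and cancel the $\nabla V$ terms via stationarity. The paper does not prove this proposition itself but simply cites Proposition~1 of Marbach and Tsitsiklis, whose proof is exactly the one you reproduce, so there is nothing to add beyond noting that the final aside $\nabla P_{ss'} = P_{ss'}\nabla\log P_{ss'}$ additionally requires $P_{ss'}>0$ (which the sigmoid smoothing does ensure for the randomized transitions, but is not needed for the stated identity).
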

%\begin{proof}
%Proof is provided in \cite{marbach2001simulation}.
%\end{proof}
\begin{remark}
%\textcolor{blue}{
%The first term on the right of (\ref{eq:nabla_1}) 
%was dropped earlier, the reason given
%is non-zero 
%since the reward is not independent of the parameter. 
%This is incorrect 
%because the indicator function multiplying the reward is not so.
The first term on the right of (\ref{eq:nabla_1}) was dropped earlier, the
reason given being that ‘the reward is independent of the parameter’.
This is incorrect because the indicator function multiplying the
reward is not so.
%The first term on the right of (\ref{eq:nabla_1}) is non-zero because the indicator function multiplying the reward is not independent of the parameter.
Nevertheless, it turns out 
(see Fig. \ref{fig:cost_1_step} and \ref{fig:cost_2_step})
that the performance is hardly affected by the omission of this term. This is presumably because this term is significant only in a small neighborhood of the threshold. 
%This mistake is also present in our earlier works \cite{roy2019structure,roy2019low}. 
This mistake is present in our earlier works \cite{roy2019structure,roy2019low} and similar remarks apply there.
\footnote{We thank Prof. Aditya Mahajan, McGill University for pointing out this error.}
\end{remark}
Based on the proposed approximation (\ref{eq:sigmoid}), we set to devise an online update rule for the threshold vector in the slower timescale $b(.)$, following  (\ref{eq:nabla_2}). We evaluate $\nabla P_{ss'}(\boldsymbol{\tau})$ as a representative of $\nabla \sigma(\boldsymbol{\tau})$ since the stationary probabilities in  (\ref{eq:nabla_1}) can be replaced by averaging over time. Using  (\ref{eq:appx}), we get
\begin{equation}\label{eq:appxnabla}
    \nabla P_{ss'}(\boldsymbol{\tau})=(P_0(s'|s)-P_1(s'|s)) \nabla f(s,\boldsymbol{\tau}).
\end{equation}
We incorporate a multiplying factor of $\frac{1}{2}$ in the right hand side of (\ref{eq:appxnabla}) since multiplication by a constant term does not alter the scheme. The physical significance of this operation is that
at every iteration, transitions following rules 
$P_1(.|.)$ and $P_0(.|.)$ are adopted with equal probabilities.$\nabla f(s,\boldsymbol{\tau})$ depends on the system state and threshold vector at any given iteration.
Similar online rule can be devised for $\nabla g_s (\boldsymbol{\tau})$ using an identical procedure. \par 
Based on this, when  $i^{\rm{th}}$ event occur, the online update rule 
for the $i^{\rm{th}}$ component of the threshold vector is 
as follows.
\begin{equation*}
\begin{split}
{\tau}_{n+1}(i)=&\Omega_i[{\tau}_{n}(i)+b(n)\nabla f(s,{\tau}_n(i))({(-1)}^{\beta_n}\hat{h}_{\beta_n}(s,i)+\\&{(-1)}^{\alpha_n}V_n(\hat{s},{\tau}_n(i)))],  
\end{split} 
\end{equation*}
where $\alpha_n$ and $\beta_n$ are i.i.d. random variables, each of which can take values $0$ and $1$ with equal probabilities. If $\alpha_n=1$, then the transition is governed by the rule $P_1(.|.)$, else by $P_0(.|.)$. In other words, the next state is $\hat{s}$ with probability $\alpha_n P_1(\hat{s}|s)+(1-\alpha_n)P_0(\hat{s}|s)$. Similarly, $\hat{h}_{\beta}(s,i)=-\beta h(s)+(1-\beta)(R_i-h(s))$ where the obtained reward is $-h(s)$ and $R_i-h(s)$ with equal probabilities. 
The projection operator $\Omega_i$ ensures that 
${\tau}(i)$ iterates remain bounded in a specific interval, as specified later. Recall that, we have assumed that the threshold for $i^{\rm{th}}$ event is a non-increasing function of $i$. Therefore, the $i^{\rm{th}}$ component of the threshold vector iterates should always be less than or equal to the $(i-1)^{\rm{th}}$ component. 
%Based on this, the online update rule for the $i^{\rm{th}}$ element 
%($\forall i \neq 1$)
%of $\boldsymbol{\tau}$ (when $i^{\rm{th}}$ type of customer arrives) is as follows.
%\begin{equation*}
%\boldsymbol{\tau}_{n+1}(i)=\Omega_i[\boldsymbol{\tau}_{n}(i)+h(n)\nabla f(s,\boldsymbol{\tau}_n(i)){(-1)}^{\alpha}V_n(\hat{s},\boldsymbol{\tau}_n)],
%\end{equation*}
The first component of $\boldsymbol{\tau}$ is considered to be a free variable which can choose any value in $[0,W]$. 
The projection operator $\Omega_i,\{i> 0\}$ ensures that
${\tau}_n(i)$ remains bounded in $[0,{\tau}_{n}(i-1)]$.
To be precise, 
\begin{equation*}
\begin{split}
&\Omega_0: x \mapsto 0\vee(x\wedge W) \in [0, W],\\&
\Omega_i: x \mapsto 0\vee(x\wedge {\tau}(i-1)) \in [0,{\tau}(i-1)]. \quad \forall i> 0.
\end{split}
\end{equation*}
The framework of SA enables us to obtain the effective drift in (\ref{eq:appxnabla}) by performing averaging. 
Therefore the online RL scheme where the value functions and the threshold vector are updated in the faster and the slower timescale, respectively, is as follows. We suppress the parametric dependence of $V$ on $\tau$. 
\begin{equation}\label{eq:primal_1}
\begin{split}
 V_{n+1}(s)=&(1-a(\gamma(s,n)))V_n(s)+a(\gamma(s,n))[-h(s)+\\&R_i\mathbbm{1}_{\{l(s,\tau(i))=A_2\}}+V_n(s')-V_n(s^*)],\\
V_{n+1}(\tilde{s})=&V_{n}(\tilde{s}), \forall \tilde{s}\neq s,
\end{split}
\end{equation}
and
\begin{equation}\label{eq:dual_1}
\begin{split}
%&\boldsymbol{\tau}_{n+1}(1)=\Omega_1[\boldsymbol{\tau}_{n}(1)+h(n)\nabla f(s,\boldsymbol{\tau}_n(1)){(-1)}^{\alpha}V_n(\hat{s})], \\&
{\tau}_{n+1}(i)=&\Omega_i[{\tau}_{n}(i)+b(n)\nabla f(s,{\tau}_n(i))({(-1)}^{\beta_n}\hat{h}_{\beta_n}(s,i)+\\&{(-1)}^{\alpha_n} V_n(\hat{s}))], \\ 
{\tau}_{n+1}(i')=&{\tau}_{n}(i'), \ \forall i'< i, \\ 
{\tau}_{n+1}(i')=&\Omega_i [{\tau}_{n}(i')], \ \forall i'> i,
\end{split}
\end{equation}
where for current state $s$, transition to next state $s'$ in (\ref{eq:primal_1}) refers to a single run of a simulated chain as commonly seen in RL, and $\hat{s}$ in (\ref{eq:dual_1}) is determined separately following the probability distribution $\alpha_n P_1(\hat{s}|s)+(1-\alpha_n) P_0(\hat{s}|s)$. The immediate reward is determined using $-\beta_n h(s)+(1-\beta_n)(R_i-h(s))$. 
The physical significance of (\ref{eq:dual_1}) is that when $i^{\rm{th}}$ type of event occurs, then the $i^{\rm{th}}$ component of $\boldsymbol{\tau}$ is updated. However, since the components are provably ordered, we need to update the $i'^{\rm{th}}$ components too where
 $i'> i$. We no longer need to update the components
 for $i'< i$ since the order is already preserved while updating the $i^{\rm{th}}$ component.
 Also, in (\ref{eq:primal_1}), the reward function is taken to be $R_i\mathbbm{1}_{\{l(s,{\tau}(i))=A_2\}}$ when $i^{\rm {th}}$ event occurs. The expectation operation in  (\ref{eq:RVIA}) is mimicked by the averaging over time implicit in an SA scheme. Note that contrary to \cite{roy2019low} where due to the independence among the threshold parameters, only one threshold is updated at a time, in this paper, multiple threshold parameters may need to get updated to capture the ordering constraints.
 \begin{remark}
Instead of the two-timescale approach adopted in this paper, a multi-timescale approach where each individual threshold is updated in a separate timescale, may be chosen. However, since the updates of thresholds are coupled only through the ordering constraints, they can be updated in the same timescale. Moreover, in practice, a multi-timescale scheme may not work well since the fastest (slowest) timescale may be too fast (slow), leading to higher fluctuations and/or very slow speed.
 \end{remark}

 \begin{theorem}\label{theo:2}
%The schemes described by 
Update rules (\ref{eq:primal_1}) and (\ref{eq:dual_1}) converge to the optimal policy almost surely (a.s.).
\end{theorem}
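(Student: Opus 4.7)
The plan is to invoke the two-timescale stochastic approximation framework of Borkar (Chapters 6--8 of \emph{Stochastic Approximation}), exploiting the step-size separation $b(n)/a(n)\to 0$. Under this separation, the value function iterate in (\ref{eq:primal_1}) is viewed on the natural timescale, while the threshold vector in (\ref{eq:dual_1}) evolves on a slower one. Concretely, I would show that from the perspective of (\ref{eq:dual_1}) the value function looks equilibrated at $V(\cdot,\boldsymbol{\tau})$, while from the perspective of (\ref{eq:primal_1}) the threshold vector looks frozen; this ``leader--follower'' decoupling is what reduces the joint analysis to two separate single-timescale problems.

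First I would handle the faster timescale. For a frozen $\boldsymbol{\tau}$, the rule (\ref{eq:primal_1}) is an asynchronous relative-value-iteration-type stochastic approximation whose mean field is $T_{\boldsymbol{\tau}}V - V(s^*)\mathbf{1} - V$, where $T_{\boldsymbol{\tau}}$ is the policy-evaluation Bellman operator for the randomized threshold policy obtained from (\ref{eq:appx})--(\ref{eq:sigmoid}). Since the underlying chain is unichain (the state $0$ is reachable from any state with positive probability), the asynchrony assumption of Borkar's Theorem (balanced visit frequencies through $\gamma(s,n)$) is met, the noise is a martingale difference with bounded second moment by Assumption~\ref{ass:1}, and the associated ODE $\dot V = T_{\boldsymbol{\tau}}V - V(s^*)\mathbf{1} - V$ has a unique globally asymptotically stable equilibrium $V(\cdot,\boldsymbol{\tau})$ (the relative value function). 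Stability of the iterates can be established by the Borkar--Meyn theorem after verifying the standard scaling limit condition for the RVIA operator. Hence $V_n(\cdot)\to V(\cdot,\boldsymbol{\tau}_n)$ along the faster timescale.

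Next I would analyze the slower timescale. Substituting the tracked value function into (\ref{eq:dual_1}), the randomization over $\alpha_n,\beta_n$ makes the conditional expected increment of $\tau(i)$ proportional to $\nabla f(s,\tau(i))\bigl[(P_0(\hat s|s)-P_1(\hat s|s))V(\hat s,\boldsymbol{\tau}) + \partial_{\tau(i)} g_s(\tau(i))\bigr]$, which, after averaging $s$ over the stationary distribution $\pi(\cdot,\boldsymbol{\tau})$ via the SA averaging principle, reproduces the $i$-th component of the policy-gradient expression (\ref{eq:nabla_1}), up to the constant factor absorbed into the step size. The projection $\Omega_i$ keeps iterates inside the compact ordered simplex $\{0\le\tau(N)\le\cdots\le\tau(0)\le W\}$, so the relevant limiting dynamics is the projected ODE $\dot{\boldsymbol{\tau}} = \Pi_{T}\bigl(\nabla\sigma(\boldsymbol{\tau})\bigr)$ on this convex set. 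By Kushner--Clark, trajectories of (\ref{eq:dual_1}) converge a.s.\ to the set of KKT points of $\sigma$ on this set.

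Finally, I would argue that any such limit is the optimal threshold policy. Because the optimal policy has been assumed to be of ordered multi-threshold type, the parameterization (\ref{eq:appx})--(\ref{eq:sigmoid}), as the $e^{(s-\tau(i)-0.5)}$ slope sharpens near integer thresholds, can represent it arbitrarily closely; a stationary point of $\sigma$ within the compact ordered set at which no feasible direction increases $\sigma$ must correspond to a threshold vector $\boldsymbol{\tau}^*$ for which the greedy (ordinal) action prescribed in the \textbf{Remark} agrees with the Bellman-optimal action at every state, and thus to the optimal policy. The main obstacle I expect is precisely this last step: ruling out spurious stationary points created by the sigmoid interpolation and by the boundary of the ordering simplex, and controlling the small bias in $\nabla\sigma$ induced by (\ref{eq:sigmoid}) highlighted in the authors' remark following (\ref{eq:nabla_1}). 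I would handle this by combining (i) a coupling argument showing that any interior stationary $\boldsymbol{\tau}^*$ must satisfy a discrete optimality condition in the original MDP once $f$ is rounded, and (ii) checking that boundary stationary points (ties $\tau(i)=\tau(i-1)$) are also consistent with the Bellman optimal policy under the standing assumption of the theorem.
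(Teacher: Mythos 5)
Your two-timescale architecture matches the paper's: the faster iterate (\ref{eq:primal_1}) is analyzed for frozen $\boldsymbol{\tau}$ as an RVIA-type stochastic approximation tracking the ODE $\dot V = M_1(V)-V$, whose globally asymptotically stable equilibrium is the relative value function (with boundedness via the Borkar--Meyn scaling-limit argument), and the slower iterate (\ref{eq:dual_1}) is a projected stochastic gradient ascent whose limiting ODE is $\dot{\boldsymbol{\tau}}=\nabla\sigma(\boldsymbol{\tau})$, with the stationary-averaging step recovering (\ref{eq:nabla_1}). Up to that point your proposal and the paper's proof coincide in substance.

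The gap is in the final step, which you yourself flag as the main obstacle but do not close. The ODE/Kushner--Clark machinery only yields convergence to stationary points of the projected gradient dynamics; to conclude convergence to the \emph{optimal} policy one must exclude non-global local maxima. The paper does this with a specific structural ingredient: $\sigma(\boldsymbol{\tau})$ is unimodal in $\boldsymbol{\tau}$ (Lemma \ref{lemma:unimodal}, proved via the monotonicity in $n$ of $v_n(s+1)-v_n(s)$, Lemma \ref{lemma:noninc}, and a re-designed-problem argument), together with the observation that the gradient points inward at ${\tau}(\cdot)=0$ and ${\tau}(\cdot)=W$, so the projection introduces no spurious boundary equilibria. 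Your substitute --- a coupling/rounding argument plus a check of boundary ties --- is only a sketch and, as stated, would not rule out interior local maxima of $\sigma$; without a property like unimodality such points can exist, and the paper's own remark concedes that in that case only convergence to a local maximum is guaranteed. So you have correctly located the hard part of the proof, but the ingredient that actually closes it (unimodality of the average reward in the threshold vector) is missing from your argument.
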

\begin{proof}
Proof is given in Appendix \ref{app:b}.
\end{proof}
\begin{remark}
Unlike the value function iterates, the threshold vector iterates do not require local clocks of individual elements for convergence.
However, all components of the threshold vector are required to get updated comparably often. In other words, their frequencies of update should be bounded away from zero which generally holds true for stochastic gradient approaches \cite[Chapter~7]{borkar2008stochastic}.
\end{remark}

Based on the foregoing analysis, we describe the resulting SALMUT algorithm in Algorithm \ref{algo:1}. 
%The number of iterations, the value functions and the threshold vector are initialized first.
On a decision epoch, 
%if there is an arrival of a specific class of customer, 
 action $a$ is chosen based on the current value of threshold vector. Based on the event, value function of current state $s$ is updated then using (\ref{eq:primal_1}) in the faster timescale. The threshold vector is also updated following (\ref{eq:dual_1}) in the slower timescale.  
Note that the value function is updated one component at a time. However, multiple components of the threshold vector may need to be updated in a single iteration.
The scheme resembles actor-critic method \cite{borkar2005actor} with policy gradient for the actor part (see (\ref{eq:dual_1})) and post-decision framework for the critic (see (\ref{eq:primal_1})).
\begin{algorithm}
\small
\caption{Two-timescale SALMUT algorithm}\label{algo:1}
\label{NCalgorithm}
\begin{algorithmic}[1]
\State Initialize  $n \leftarrow 1$,
 ${V}(s) \leftarrow 0, \forall {s}\in \mathcal{S}$ and  $\boldsymbol{\tau} \leftarrow \vec{0}$.
\While {TRUE}
\If {$i^{\rm{th}}$ event occurs}
\State Choose action $a$ based on current value of ${\tau}(i)$. 
\EndIf
\State Update value function of state $s$ using  (\ref{eq:primal_1}).
\State Update threshold $\boldsymbol{\tau}$ using (\ref{eq:dual_1}).
\State Update $s\leftarrow s'$ and $n\leftarrow n+1$.
\EndWhile
\end{algorithmic}
\end{algorithm}
\begin{remark}
Even if there does not exist an optimal threshold policy for a given MDP problem, the techniques  in this paper can be applied to learn the best threshold policy (locally at least) asymptotically. Threshold policies are easy to implement and often provide comparable performances to that of the optimal policy, with a significantly lower storage complexity.  
\end{remark}
\section{Complexity Analysis}\label{sec:complexity}
In this section, we compare the storage and computational complexities of SALMUT algorithm with those of existing learning schemes %including Q-learning and PDS learning. We 
and summarize in Table \ref{table:1}.
\begin{table}[ht]
\caption{Complexities of RL algorithms.}\label{table:1}
% title of Table
\centering
\begin{tabular}{|l||l||l|}
\hline
\textbf{Algorithm} & \textbf{Computational} & \textbf{Storage}\\
\textbf{} & \textbf{complexity} & \textbf{complexity}\\ \hline
Q-learning \cite{sutton1998reinforcement,watkins1992q} & $O(|\mathcal{A}|)$ & $O(|\mathcal{S}|\times|\mathcal{A}|)$\\ \hline
Monotone Q-learning \cite{kunnumkal2008exploiting,ngo2010monotonicity} & $O(|\mathcal{A}|)$ & $O(|\mathcal{S}|\times|\mathcal{A}|)$\\ \hline
MH-Q-learning & $O(\log |\mathcal{A}|)$ & $O(|\mathcal{S}|\times|\mathcal{A}|)$\\ \hline
OQL \cite{wei2019model} &  $O(\log |\mathcal{A}|)$ & $O(|\mathcal{S}|\times|\mathcal{A}|)$ \\ \hline
EEQL \cite{jafarnia2020model} &  $O(\log |\mathcal{A}|)$ & $O(|\mathcal{S}|\times|\mathcal{A}|)$ \\ \hline

%\textcolor{blue}{EEQL} \cite{jafarnia2020model} &  \textcolor{blue}{$O(\log |\mathcal{A}|)$} & \textcolor{blue}{$O(|\mathcal{S}|\times|\mathcal{A}|)$} \\ \hline
PDS learning \cite{salodkar2008line,powell2007approximate} &  $O(|\mathcal{A}|)$ & $O(|\mathcal{S}|)$ \\ \hline
MH-PDS learning &  $O(\log |\mathcal{A}|)$ & $O(|\mathcal{S}|)$ \\ \hline
VE learning \cite{mastronarde2012joint} &  $O(|\mathcal{V}|\times|\mathcal{A}|)$ & $O(|\mathcal{S}|)$ \\ \hline
Grid learning \cite{sharma2018accelerated} &  $O(|\mathcal{W}|\times|\mathcal{A}|)$ & $O(|\mathcal{S}|)$ \\ \hline
Adaptive appx. learning \cite{fu2012structure} &  $O(q(\delta)|\mathcal{A}|)$ & $O(|\mathcal{S}|)$ \\ \hline
SALMUT  & $O(1)$ & $O(|\mathcal{S}|)$\\ \hline
%\\ learning &  &\\ \hline
\end{tabular}
\end{table}
Q-learning and PDS learning need to store the value function of every state-action pair and every PDS, respectively. While updating the value function, both choose the best one after evaluating $|\mathcal{A}|$ functions. Since only one state-action pair is updated at a time, the computational complexity associated with remaining operations is constant. Thus, the storage %per-iteration computational 
complexities of Q-learning and PDS learning are $O(|\mathcal{S}|\times|\mathcal{A}|)$ and $O(|\mathcal{S}|)$, respectively. The per-iteration computational complexity of Q-leanring and PDS learning is
$O(|\mathcal{S}|)$. Since at each iteration, the monotone Q-learning algorithm \cite{kunnumkal2008exploiting, ngo2010monotonicity} projects the policy obtained using Q-learning within the set of monotone policies, the complexities are identical to those of Q-learning. 
%PDS learning which involves computation of $|\mathcal{A}|$ functions in every iteration, has a per-iteration computational complexity of $O(|\mathcal{A}|)$ \textcolor{blue}{since one PDS value function is updated at any iteration}. The storage complexity of PDS learning is $O(|\mathcal{S}|)$ because value functions of PDSs %and feasible actions in different states 
%need to be stored.
VE learning \cite{mastronarde2012joint} updates multiple PDSs at a time. Therefore, the computational complexity contains an additional term $|\mathcal{V}|$ which signifies the cardinality of the VE tuple. Similarly, grid learning \cite{sharma2018accelerated} and adaptive approximation learning \cite{fu2012structure} are associated with additional factors $|\mathcal{W}|$ and $q(\delta)$, respectively. $|\mathcal{W}|$ and $q(\delta)$ depend on the depth of a quadtree used for value function approximation and the approximation error threshold
$\delta$, respectively. Note that computational complexities of Q-learning and PDS learning can be reduced to $O(\log|\mathcal{A}|)$ using a max-heap implementation (MH-Q-learning and MH-PDS learning in Table \ref{table:1}) where the complexities of obtaining the
best action and updating a value are  $O(1)$ and $O(\log |\mathcal{A}|)$, respectively. The computational complexity of a max-heap implementation of Optimistic Q-learning (OQL) \cite{wei2019model} and Exploration Enhanced Q-learning (EEQL) \cite{jafarnia2020model} is $O(\log |\mathcal{A}|)$. The storage complexity of OQL and EEQL is $O(|\mathcal{S}|\times|\mathcal{A}|)$. It is not clear whether the knowledge of structural properties can be encoded in Q-learning easily since the range of the threshold is large, viz., the entire state space.
The storage complexity of SALMUT algorithm is $O(|\mathcal{S}|)$ as we need to store the value functions of states. 
%Moreover, since the threshold vector completely characterizes the policy, we no longer need to store feasible actions in different states. 
%Since generally $|\mathcal{S}|\gg N$, 
%This results in a storage complexity of $O(|\mathcal{S}|)$.
SALMUT may require to update all components of the threshold vector at a time
(See ({\ref{eq:dual_1}})).  Furthermore, the update of value function involves the computation of a single function based on
the current threshold (See (\ref{eq:primal_1})).
Therefore, the per-iteration computational complexity is $O(1)$. Thus SALMUT provides significant improvements in storage and per-iteration computational complexities compared to other schemes. Note that the computational complexity of SALMUT does not depend on $|\mathcal{A}|$ and depends only on the number of events.
%and not on the size of the action space.
\section{Illustrative Example}\label{sec:example}
We consider a queuing system with $m$ identical servers, $N$ customer classes and a finite buffer of size $B$. We investigate an optimal admission control problem.
%in the system where $N$ classes of customers are present.
It is assumed that the arrival of class-$i$ customers is a Poisson process with mean $\lambda_i$. Let the service time be exponentially distributed with mean $\frac{1}{\mu}$, irrespective of the customer class. Note that presence of $m$ identical servers does
not change the nature of the model. Similar analysis holds for the single server case also. 
The state of the system is $(s,i)$ where $s\in\{0,1,\ldots,m+B\}$ denotes the total number of customers in the system and $i\in \{0,1,\ldots,N\}$ denotes the class type. 
%$s$, where $s\in\{0,1,\ldots,m+B\}$.
Arrivals and departures of customers are taken as decision epochs.
We take $i=0$ as the departure event and $i=1,\ldots, N$
as an arrival of a $1,\ldots,N^{\rm{th}}$ class of customer. %respectively.
For example, states $(2,0)$ and $(2,1)$ correspond to a departure and an arrival of class 1 customer while there are 2 users in the system, respectively.
$\mathcal{A}$ consists of three actions, viz., blocking of an arriving user ($A_1$), 
admission of an arriving user ($A_2$)  and continue/do nothing ($A_0$, say) for departures.
%Note that when the system is in state 
When $s=m+B$, then the only feasible action for an arrival (i.e., $i>0)$ is $A_1$.
%Based on the current system state $(s,i)$ and the chosen action $a$, the system moves to state $(s',i')$ with a positive probability. The transition from state $(s,i)$ to $(s',i')$ can be factored into two parts, viz., the deterministic transition due to the chosen action and 
%the probabilistic transition due to the next event.
%Let the transition probability due to chosen action $a$ be denoted by 
%$p(s,s',a)$. 
%Then,
We have, $p(s,s',A_2)=\mathbbm{1}_{\{s'=s+1\}}$, $p(s,s',A_1)=\mathbbm{1}_{\{s'=s\}}$ and $p(s,s',A_0)=\mathbbm{1}_{\{s'=(s-1)^+\}}$.
% \begin{equation*}
%    p(s,s',a)=
%  \begin{cases}
 %    1, &s'=s+1, a=A_2,\\ 
 %    1,& s'=s, a=A_1,\\
 %    1,& s'=(s-1)^+, a=A_0,
 %\end{cases}
 %\end{equation*}
 where $x^+=\max\{x,0\}$
%Let the sum of arrival and service rates \textcolor{blue}{(i.e., mean departure rate from the state)} in state $(s,i)$ (which is independent of $i$) be denoted by $v(s)$. Therefore,
and
%\begin{equation*}
    $v(s)=\sum_{i=1}^N \lambda_i+\min\{s,m\}\mu.$
%\end{equation*}
Let $\lambda_0(s)=\min\{s,m\}\mu$ and 
$\lambda_i(s)=\lambda_i, \forall i\neq 0$. Note that $\lambda_1(s),\ldots,\lambda_N(s)$ do not depend on $s$. 
%these notations are introduced for the ease of representation. 
%Now, 
%\begin{equation*}
%    v(s)=\sum_{i=0}^N \lambda_i(s).
%\end{equation*}
%Hence the transition probability from state $(s,i)$ to state $(s',i')$ is expressed as 
%\begin{equation*}
%    p((s,i),(s',i'),a)=p(s,s',a)\frac{\lambda_{i'}(s')}{v(s')}.
%\end{equation*}
%Then,
% \begin{equation*}
 %   p(s,s',a)=
 % \begin{cases}
 %    \frac{\sum_{i=1}^N \lambda_i}{v(s)}, &s'=\hat{s},\\ \frac{\min\{s,m\}\mu}{v(s)},& s'=\hat{s}-1,
 %\end{cases}
 %\end{equation*}
 %where $\hat{s}=s$, $\hat{s}=s+1$ and $\hat{s}=s-1$ in case of an arrival with chosen action $A_1$, an arrival with chosen action $A_2$ and a departure, respectively.
% \subsection{Reward and Cost}
% Based on the system state $(s,i)\in \mathcal{S}\times\mathcal{I}$ and the chosen action $a\in \mathcal{A}$,
% finite amounts of reward rate ($r((s,i),a)$, say) and cost rate are obtained.
%Let the non-negative reward rate obtained by the admission of a class-$i$ customer be $R_i$, where 
%$R_i>R_j$ for $i<j$. 
%Therefore,
Furthermore,
$r((s,i),a)=
  R_i\mathbbm{1}_{\{a=A_2,i>0\}}.$
%\begin{equation*}
%r((s,i),a)=\begin{cases}
%  R_i, & i>0 \ \text{and}\ a=A_2,\\% E_l=(E_1||E_3) \text{ and } a=A_1, \\
%  0, & \text{else}.\\
%  \end{cases}
%\end{equation*}
$R_0=0$ corresponds to a departure event.
In state $(s,i)$, a non-negative cost rate of $h(s)$ is incurred.
$h(s)$ and $h(s+1)-h(s)$ are increasing in $s$ (convex increasing in the discrete domain).

%We assume that if the system is in state $(s,i)$, then a non-negative cost rate of $h(s)$ (independent of $i$) where $h(s)$ and $h(s+1)-h(s)$ are non-decreasing functions of $s$ (convex increasing in the discrete domain),
%is incurred.\par

One application of this model is a
make-to-stock production system that produces $m$ items with $N$ demand classes\cite{ha1997inventory} and buffer size $B$. 
%Demands from each class is a Poisson process. %Each demand requests a single unit of the product. 
Satisfaction of a demand (requesting a single unit of the product) from class-$i$ gives rise to reward rate $R_i$. The production time is exponentially distributed with mean $\frac{1}{\mu}$. The inventory holding cost rate is $h(s)$. \par
%The model also applies to differentiated service in cellular networks where users with same mean packet sizes belong to different classes \cite{cheng2002diffserv}.}\par
%\subsection{Structural Property of Optimal Policy}
%In this section, 
Now, we derive that there exists a threshold based optimal policy which admits a class-$i$ customer only upto a threshold $\tau(i)$ %Moreover, $\tau(i)$ 
which is a non-increasing function of $i$. We prove these properties using the following lemma.
The proof follows from \cite[Theorem~3.1]{koole2007monotonicity}. Detailed proof is given in Appendix \ref{app:a}. 
%which establishes the non-increasing difference property of value functions. 
\begin{lemma}\label{lemma:a}
$V(s+1)-V(s)$ is decreasing in $s$.
\end{lemma}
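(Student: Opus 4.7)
My plan is to prove the concavity of $V$ in $s$ by the event-based operator technique of Koole, applied to the value iteration (\ref{eq:RVIA}) rewritten for the admission-control example. Starting from the uniformized DP in (\ref{eq:wlog}), I would decompose the Bellman operator $T$ into a sum of simpler single-event operators weighted by the normalized rates $\lambda_i/v$ and $\mu\min\{s,m\}/v$: a controlled admission operator $T_i^{\text{CA}} V(s) = \max\{V(s),\, R_i + V(s+1)\}$ for each arrival class $i\in\{1,\dots,N\}$ (with the boundary modification $T_i^{\text{CA}} V(m+B)=V(m+B)$), a departure operator $T^{D} V(s) = V((s-1)^+)$, and an additive cost operator $T^{h} V(s) = V(s) - h(s)$. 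The total operator is a convex combination of these, plus the constant shift $-\rho$, none of which affects discrete concavity.

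The core of the proof is then to show that each of these operators preserves the property $\mathcal{C}:$ ``$V(s+1) - V(s)$ is non-increasing in $s$.'' For $T^{D}$ this is immediate because $V((s-1)^+)$ is just a left-shift (with a flat region at $s=0$ that preserves concavity since $V(1)-V(0)$ already dominates the boundary zero difference). For $T^h$, concavity is preserved because $h$ is convex increasing, so subtracting $h$ adds a concave term. The nontrivial operator is $T_i^{\text{CA}}$, where one has to argue that the pointwise maximum of $V(s)$ and $R_i + V(s+1)$ preserves concavity; the standard argument exploits that, under $\mathcal{C}$, the function $V(s) - V(s+1)$ is non-decreasing in $s$, so the ``accept'' region is a down-set $\{s : s < \tau(i)\}$, and one verifies the second difference is non-positive case-by-case according to whether each of $s-1, s, s+1$ lies in the accept or reject region (four cases, each reducing to $\mathcal{C}$ applied to $V$).

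Because a finite convex combination of $\mathcal{C}$-preserving operators is $\mathcal{C}$-preserving, $T$ preserves $\mathcal{C}$. Starting from $V_0\equiv 0$, which trivially lies in $\mathcal{C}$, induction on (\ref{eq:RVIA}) yields $V_n \in \mathcal{C}$ for all $n$. Since RVIA converges to the unique (up to additive constant) solution $V$ of the average-reward DP equation for this unichain problem, $V$ itself satisfies $\mathcal{C}$, which is exactly the claim. The threshold and monotonicity-in-$i$ conclusions then follow as a corollary: the accept-set of class $i$ is $\{s:s<\tau(i)\}$ where $\tau(i):=\min\{s: V(s)-V(s+1) > R_i\}$, and $R_i$ decreasing in $i$ forces $\tau(i)$ non-increasing in $i$.

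The main obstacle I anticipate is the boundary behavior of $T_i^{\text{CA}}$ at $s=m+B$ (where only rejection is feasible) and of $T^{D}$ at $s=0$: one must check that introducing these artificial ``flat'' boundaries does not destroy $\mathcal{C}$. The standard resolution, and the one I would use, is to extend the operators in the natural way (reject above capacity, idle at zero) and observe that the second difference at the boundary becomes even more negative, not less, so $\mathcal{C}$ is preserved. Once this bookkeeping is done, the rest of the proof is a routine induction and the detailed case-check for $T_i^{\text{CA}}$, which is what Appendix \ref{app:a} would spell out explicitly via \cite{koole2007monotonicity}.
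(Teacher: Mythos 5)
Your proposal is correct and follows essentially the same route as the paper: Appendix \ref{app:a} also proceeds by induction on the value iteration iterates starting from $v_0 \equiv 0$, showing each constituent of the uniformized Bellman update preserves concavity --- the controlled admission term via the same four-case analysis of the maximizing actions, the departure and self-loop terms via the $s \ge m$ and $s < m$ split, and the cost term via convexity of $h$ --- before passing to the limit $v_n \to V$. The only cosmetic difference is that you package the argument explicitly as Koole's event-based operators closed under convex combination, whereas the paper carries out the same steps inline.
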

%\begin{proof}

%Proof is presented in Appendix \ref{app:a}. 
%\end{proof}
\begin{theorem}\label{theo:1}
The optimal policy is of threshold-type where it is optimal to admit class-$i$ customers only upto a threshold $\tau(i) \in \mathcal{S}$ which is a non-increasing function of $i$.
\end{theorem}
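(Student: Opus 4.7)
The plan is to apply Lemma \ref{lemma:a} to the per-event DP equation in a very direct way. First I would fix a class $i>0$ and a state $s<m+B$, and isolate the Bellman comparison between the two feasible actions on a class-$i$ arrival. Since $p(s,s',A_2)=\mathbbm{1}_{\{s'=s+1\}}$ and $p(s,s',A_1)=\mathbbm{1}_{\{s'=s\}}$, admitting contributes $R_i+V(s+1)$ to the bracketed expression of (\ref{eq:wlog}) while blocking contributes $V(s)$. Hence on such an event the optimal choice is $A_2$ iff
\[
R_i \;\geq\; V(s)-V(s+1) \;=:\; \Delta(s).
\]

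Next I would appeal to Lemma \ref{lemma:a}: the statement that $V(s+1)-V(s)$ is decreasing in $s$ is equivalent to $\Delta(s)$ being non-decreasing in $s$. Consequently the admission region $\{s : R_i \geq \Delta(s)\}$ is a prefix of $\{0,1,\ldots,m+B-1\}$, and I can set
\[
\tau(i) \;:=\; \min\{\, s\in\mathcal{S} : R_i < \Delta(s)\,\}
\]
(with $\tau(i)=m+B$ if the set is empty). Then admission is optimal exactly for $s<\tau(i)$, and blocking otherwise; the boundary $s=m+B$ is handled automatically since blocking is the only feasible action there.

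For the ordering of thresholds, fix $i<j$ so that $R_i>R_j$ by assumption. Then $R_j\geq\Delta(s)$ implies $R_i\geq\Delta(s)$, so the admission set for class $j$ is contained in that for class $i$. Taking the first state at which each exits its admission region gives $\tau(j)\leq\tau(i)$, i.e.\ $\tau(\cdot)$ is non-increasing in the class index, which is the second claim.

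I do not expect any serious technical obstacle in this proof: given Lemma \ref{lemma:a}, the theorem reduces to an algebraic comparison in the DP equation together with a monotonicity observation on the reward ordering. The real conceptual work is contained in Lemma \ref{lemma:a} itself, and its proof via the event-based monotonicity framework of \cite{koole2007monotonicity} is deferred to Appendix \ref{app:a}. Minor care is also needed in tie-breaking when $R_i=\Delta(s)$ (any fixed rule, e.g.\ admitting on equality, suffices) and at the buffer-full boundary, but neither affects the threshold structure.
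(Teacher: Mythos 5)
Your proposal is correct and follows essentially the same argument as the paper's proof: compare $R_i+V(s+1)$ against $V(s)$ in the DP equation, invoke Lemma~\ref{lemma:a} to get the prefix (threshold) structure, and use $R_i>R_j$ for $i<j$ to order the thresholds. Your version is just slightly more explicit about the definition of $\tau(i)$, tie-breaking, and the full-buffer boundary.
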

\begin{proof}
For class-$i$ customers, if $A_1$ is optimal in state $s$, then $R_i+V(s+1)\le V(s)$ (Using (\ref{eq:wlog})). From Lemma \ref{lemma:a},
$V(s+1)-V(s)$ is decreasing in $s$. This proves the existence of a threshold $\tau(i)$ for class-$i$ customers.
Since %we have assumed that 
$R_i>R_j$ for $i<j$, $R_i+V(s+1)\le V(s)$ implies 
$R_j+V(s+1)\le V(s)$. 
%Hence, if in state $s$, optimal action for $i^{\rm{th}}$ class of customer is $A_1$, then $A_1$ has to be optimal for $j^{\rm{th}}$ class too.
Therefore,
$\tau(i)$ is  a non-increasing function of $i$. 
\end{proof}
%\textcolor{blue}{Note that the optimality of thereshold policies is a recurrent theme in queuing theory literature, see  
%\cite{koole2007monotonicity} for an excellent overview on this.}
The subsequent lemmas
establish
the unimodality of the average reward with respect to $\boldsymbol{\tau}$. Hence, 
Theorem \ref{theo:2}
holds. Proofs are presented in Appendix \ref{app:c}.
%threshold vector iterates $\boldsymbol{\tau}_n$ converge to the optimal threshold vector $\boldsymbol{\tau}^*$ and hence, $(V_n,\boldsymbol{\tau}_n)$ converges to the optimal pair $(V,\boldsymbol{\tau}^* )$.
\begin{lemma}\label{lemma:noninc}
$v_n(s+1)-v_n(s)$ is decreasing in $n$.
\end{lemma}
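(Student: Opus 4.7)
The plan is to prove that the sequence $\Delta_n(s) := v_n(s+1) - v_n(s)$ is non-increasing in $n$ by induction on $n$, working with the RVIA (equation \ref{eq:RVIA}) specialized to a fixed ordered threshold policy $\boldsymbol{\tau}$. Under such a fixed policy the $\max$ operator disappears and the RVIA becomes a linear operator $T^{\boldsymbol{\tau}}$; since the ultimate goal is to derive unimodality of $\sigma(\boldsymbol{\tau})$, it is the behavior of this fixed-policy iteration that is relevant.

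First I would initialize $v_0 \equiv 0$ so that $\Delta_0(s) = 0$, and then derive the one-step relation
\begin{equation*}
\Delta_{n+1}(s) = [\bar{g}_{s+1}(\boldsymbol{\tau}) - \bar{g}_s(\boldsymbol{\tau})] - [h(s+1) - h(s)] + \sum_{s'}[P_{s+1,s'}(\boldsymbol{\tau}) - P_{s,s'}(\boldsymbol{\tau})]\, v_n(s'),
\end{equation*}
where the reference term $v_n(s^*)$ cancels when taking the difference. For the base case, $v_1(s) = \bar{g}_s(\boldsymbol{\tau}) - h(s)$, and the convex-increasing assumption on $h(\cdot)$ together with the fact that under a threshold policy the collected reward rate $\bar{g}_s(\boldsymbol{\tau})$ is non-increasing in $s$ (admitted classes disappear as $s$ crosses each $\tau(i)$) immediately yields $\Delta_1(s) \le 0 = \Delta_0(s)$.

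The inductive step exploits the birth-death structure of the queuing model from Section \ref{sec:example}: from state $s$ the only possible next states are $s-1$, $s$, and $s+1$. This allows me to telescope the summation in the displayed expression and rewrite it as a non-negative convex combination of $\Delta_n(s-1)$, $\Delta_n(s)$, and $\Delta_n(s+1)$ (the coefficients being weighted event probabilities $\lambda_i(s)/v(s)$ and the service rate). Subtracting the analogous identity for $\Delta_n(s)$ expresses $\Delta_{n+1}(s) - \Delta_n(s)$ as the same non-negative combination applied to the vector $\Delta_n(\cdot) - \Delta_{n-1}(\cdot)$, which is coordinate-wise non-positive by the inductive hypothesis. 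Hence $\Delta_{n+1}(s) \le \Delta_n(s)$.

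The main obstacle will be the boundary handling: at $s=0$ the reflection $(s-1)^+ = 0$ creates an extra self-loop mass, and at $s = m+B$ every arrival is forced to be blocked regardless of $\boldsymbol{\tau}$, so both endpoints break the clean birth-death telescoping. I would treat these states separately, showing that the perturbations they introduce are of a form that still preserves the sign of the increment $\Delta_{n+1}(s) - \Delta_n(s)$, using the convex-increasing cost $h(\cdot)$ and the concavity property of $v_n$ in $s$ (established by Lemma \ref{lemma:a}). Passing $n \to \infty$ then combines with Lemma \ref{lemma:a} to give the structural estimates needed to prove unimodality of $\sigma(\boldsymbol{\tau})$ and hence Theorem \ref{theo:2}.
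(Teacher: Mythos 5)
There is a genuine gap, and it sits in your very first move: you fix a threshold policy and drop the $\max$, turning the recursion into a linear one. But the $v_n$ in this lemma are the value iteration iterates of (\ref{eq:app_dec}), with the maximization over $\{A_1,A_2\}$ intact, and the downstream use in Lemma \ref{lemma:unimodal} requires exactly that version: there one defines $U_{i,n}=\min\{s\in\mathbb{N}_0: v_n(s+1)-v_n(s)\le -R_i\}$, the \emph{optimal} threshold at the $n^{\rm th}$ iteration of VIA, and uses the monotone decrease of $v_n(s+1)-v_n(s)$ in $n$ to conclude that $U_{i,n}$ decreases monotonically to $\tau^*(i)$ (and the same fact is invoked again for a ``re-designed'' problem in which $A_1$ is merely forbidden below $\tau'(i)$ but the maximization survives elsewhere). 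A fixed-policy statement cannot deliver this; under a fixed policy the notion of a per-iteration optimal threshold does not even arise. So your plan proves a different, weaker lemma than the one that is needed.

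Moreover, removing the $\max$ removes precisely the difficulty the proof must confront. Your telescoping of $\Delta_{n+1}(s)-\Delta_n(s)$ into ``the same non-negative combination'' of $\Delta_n(\cdot)-\Delta_{n-1}(\cdot)$ works only because a fixed policy has the same effective transition kernel at iterations $n$ and $n+1$. With the $\max$ present, the maximizing actions in states $s$ and $s+1$ may change between consecutive iterations, and the subtraction no longer produces a common kernel. The paper's proof deals with this head-on: it isolates the term $\hat{v}_{i,n+1}(s)=\max\{v_n(s),R_i+v_n(s+1)\}$, runs a case analysis over the four pairs of maximizing actions at iterations $n+1$ and $n+2$, rules out the one problematic configuration ($a_{i,2}=A_2$, $b_{i,1}=A_1$) by combining Lemma \ref{lemma:a} (concavity of $v_n$ in $s$) with the inductive hypothesis, and bounds the remaining cases by substituting deliberately suboptimal actions. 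That argument, or an equivalent one, is the actual content of the lemma; your proposal does not contain it. Your boundary concerns at $s=0$ and $s=m+B$ are legitimate but secondary --- the paper sidesteps them by splitting into the cases $s\ge m$ and $s<m$ and keeping the reflected term $v_n((s-1)^+)$ explicit rather than telescoping it away.
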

%\begin{proof}
%Proof is given in Appendix \ref{app:c}.
%\end{proof}
\begin{lemma}\label{lemma:unimodal}
$\sigma(\boldsymbol{\tau})$ is unimodal in $\boldsymbol{\tau}$.
\end{lemma}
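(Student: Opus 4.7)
My plan is to prove unimodality by combining a coordinate-wise analysis with a policy improvement argument, leveraging the concavity of the value function established in Lemma \ref{lemma:a}. The idea is to show that along each coordinate direction the average reward $\sigma$ has a single sign-change in its discrete derivative, and then to use a bundle-style ascent argument to promote this to joint unimodality on the ordered polytope.

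First I would derive the standard one-step policy difference formula for average-reward semi-Markov decision processes: for two threshold vectors $\boldsymbol{\tau}$ and $\boldsymbol{\tau}'$ that differ only in coordinate $i$ with $\tau'(i) = \tau(i) + 1$, the Poisson equation yields
\begin{equation*}
\sigma(\boldsymbol{\tau}') - \sigma(\boldsymbol{\tau}) \;=\; \kappa_i(\boldsymbol{\tau})\bigl[R_i + V(\tau(i)+2,\boldsymbol{\tau}) - V(\tau(i)+1,\boldsymbol{\tau})\bigr],
\end{equation*}
where $\kappa_i(\boldsymbol{\tau})>0$ captures the stationary visitation frequency of the pair (state $\tau(i)+1$, event $i$). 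This identity follows because the two policies differ only at a single state-event pair, reducing the general policy difference expression to this one-term advantage.

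Next, Lemma \ref{lemma:a} states that $V(s+1,\boldsymbol{\tau}) - V(s,\boldsymbol{\tau})$ is decreasing in $s$, so the bracketed advantage $R_i + V(s+1,\boldsymbol{\tau}) - V(s,\boldsymbol{\tau})$ is decreasing in $s$ and changes sign at most once, from positive to negative. This immediately yields coordinate-wise unimodality of $\sigma$ in each $\tau(i)$ with the remaining coordinates held fixed, with the unique coordinate maximizer at the sign-change location. To lift this to joint unimodality on the constrained polytope $\tau(0) \ge \tau(1) \ge \cdots \ge \tau(N)$, I would iterate the one-step improvement: at any non-optimal $\boldsymbol{\tau}$, coordinate-wise unimodality produces at least one coordinate admitting a strictly improving unit step (respecting the ordering through the projection $\Omega_i$). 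Lemma \ref{lemma:noninc} then controls the evolution of the accompanying value-function iterates along this improvement trajectory, so the procedure yields a strictly ascending sequence of $\sigma$ values converging to the unique joint maximum $\boldsymbol{\tau}^\ast$ identified by Theorem \ref{theo:1}, ruling out non-global local maxima.

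The main obstacle will be the handling of the ordering constraints: when the coordinate-wise improving direction at $\tau(i)$ would violate $\tau(i) \le \tau(i-1)$ or $\tau(i) \ge \tau(i+1)$, I must show either that an adjacent coordinate is simultaneously improvable in the same direction (so that a bundled update strictly increases $\sigma$), or rule the case out by arguing that any such boundary $\boldsymbol{\tau}$ can already be improved along an interior coordinate. This coupling between adjacent coordinates through the ordering constraint is precisely what distinguishes this setting from the independent-thresholds situation of \cite{roy2019low} and will require the monotone advantage structure to be invoked uniformly across several coordinates at once.
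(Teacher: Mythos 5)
Your coordinate-wise performance-difference strategy has a real gap at its foundation. The one-step formula $\sigma(\boldsymbol{\tau}+e_i)-\sigma(\boldsymbol{\tau})=\kappa_i(\boldsymbol{\tau})\bigl[R_i+V(s+1,\boldsymbol{\tau})-V(s,\boldsymbol{\tau})\bigr]$ evaluates the advantage with respect to the value function of the \emph{fixed, generally suboptimal} policy $\boldsymbol{\tau}$, but Lemma \ref{lemma:a} only establishes concavity of the \emph{optimal} value function: its proof in Appendix \ref{app:a} leans on the $\max$ over actions (the step $2\hat{v}_{i,n+1}(s+1)\ge \hat{v}_{i,n+1}(s+1,a_{i,1})+\hat{v}_{i,n+1}(s+1,a_{i,2})$), and for an arbitrary fixed threshold $t$ the induced one-step map gives $D\hat v=Dv(s+1)$ below $t$, the constant $-R_i$ at the crossover, and $Dv(s)$ above, which is decreasing in $s$ only when $t$ already sits at the sign change of the advantage. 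So concavity of $V(\cdot,\boldsymbol{\tau})$ at an arbitrary $\boldsymbol{\tau}$ is precisely what would need proving, not something you may import. Moreover, even granting it, each successive difference $\sigma(\boldsymbol{\tau}+(k+1)e_i)-\sigma(\boldsymbol{\tau}+ke_i)$ is an advantage measured against a \emph{different} value function $V(\cdot,\boldsymbol{\tau}+ke_i)$; a single sign change of the advantage of one fixed policy does not by itself imply a single sign change in the increments of $k\mapsto\sigma(\boldsymbol{\tau}+ke_i)$. A further small issue: states above the largest threshold are transient under the corresponding policy, so $\kappa_i$ can vanish and your ``strictly improving unit step'' need not exist.

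The paper avoids both problems by arguing through value iteration rather than policy differences: Lemma \ref{lemma:noninc} shows $v_n(s+1)-v_n(s)$ is decreasing in $n$, so the per-iteration threshold $U_{i,n}$ decreases monotonically to $\tau^*(i)$; then, for any $\tau'(i)>\tau^*(i)$, a re-designed problem in which $A_1$ is forbidden below $\tau'(i)$ shares its VIA iterates with the original problem until the threshold first reaches $\tau'(i)$, whence the best policy in the class with threshold at least $\tau'(i)$ has threshold exactly $\tau'(i)$ and dominates the one with $\tau'(i)+1$. This yields monotone decrease of $\sigma$ beyond $\tau^*(i)$ and hence unimodality, entirely bypassing concavity of suboptimal value functions. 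To salvage your route you would need a diagonal monotonicity statement such as ``$R_i+V(s+1,\boldsymbol{\tau}_s)-V(s,\boldsymbol{\tau}_s)$ is decreasing in $s$ along the path of policies $\boldsymbol{\tau}_s$,'' which is essentially as hard as the lemma itself. Your attention to the ordering constraints is a point in your favor---the paper's proof is purely coordinate-wise and silent on that coupling---but the core step must be repaired first.
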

%\begin{proof}
%Proof is described in Appendix \ref{app:c}.
%\end{proof}
For this particular problem, the improvement in computational complexity offered by SALMUT may not be significant since $|\mathcal{A}|=3$. However, in general, for a large $|\mathcal{A}|$, the improvement in computational complexity may be remarkable.
\section{Simulation Results}\label{sec:simu}
In this section, we 
%demonstrate the advantage offered by 
compare the convergence speed of SALMUT algorithm 
%in terms of the convergence speed with respect to 
with traditional RL algorithms. We simulate the finite buffer multi-server system with two customer classes. We take $\lambda_1=\lambda_2=1$ s$^{-1}$, $m=B=5$,
$R_1=20$, $R_2=10$, 
%The cost function is chosen as
$h(s)=0.1s^2$,   $a(n)=\frac{1}{(\lfloor{\frac{n}{100}\rfloor+2})^{0.6}}$ and 
$b(n)=\frac{10}{n}$. We exclude initial 10 burn-in period values of the iterates. \par 
%Our observations establish that SALMUT algorithm converges faster than other algorithms. Note that the proposed algorithm can be applied to any general scenario involving optimality of threshold policies, e.g., \cite{agarwal2008structural,sinha2012optimal,brouns2006optimal,ngo2009optimality}.
%\subsection{Convergence Behavior}

%\begin{figure*}[t!]
%    \centering
%    \begin{subfigure}[t]{0.4\textwidth}
    % \includegraphics[width=\textwidth]{pic/bfvoice_1.eps}
    %   \includegraphics[width=\textwidth]{plot/plot_1.pdf}
        %\caption{{$\mu=4 s^{-1}$.}}
    %    \caption{}
    %    \label{fig:cost_1}
    %    \end{subfigure}%
    %~
    %\begin{subfigure}[t]{0.4\textwidth}
        %\centering
       % \includegraphics[width=\textwidth]{pic/ipv4voice_1.eps}
    %   \includegraphics[width=\textwidth]{plot/plot_2.pdf}
        %\caption{{ $\mu=2 s^{-1}$.}}
    %    \caption{}
    %    \label{fig:cost_2}
    %    \end{subfigure}

%\caption{{Plot of average reward vs. number of iterations $(n)$ for different algorithms ((a) $\mu=4 s^{-1}$, (b) $\mu=2 s^{-1}$).}}
%\end{figure*}
%We describe the convergence behaviors of Q-learning, PDS learning and SALMUT algorithms in Figs \ref{fig:cost_1} and \ref{fig:cost_2}.
%to facilitate a convenient representation.
%\subsection{Stopping Criteria for Practical Convergence}
\begin{figure*}[t!]
    \centering
    \begin{subfigure}[t]{0.4\textwidth}
       \includegraphics[width=\textwidth]{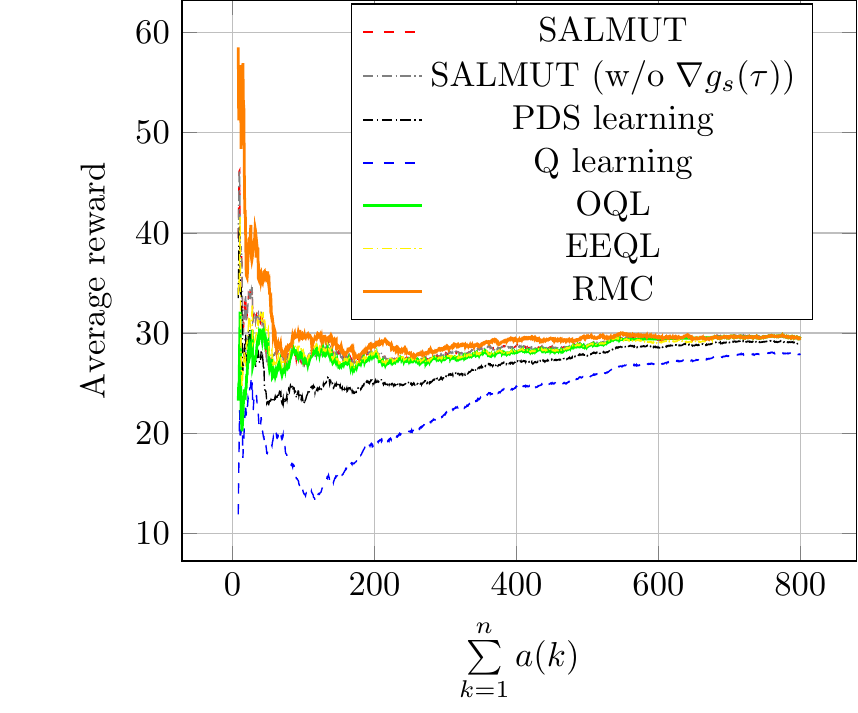}
        \caption{}
        \label{fig:cost_1_step}
        \end{subfigure}%
    ~
    \begin{subfigure}[t]{0.4\textwidth}
       \includegraphics[width=\textwidth]{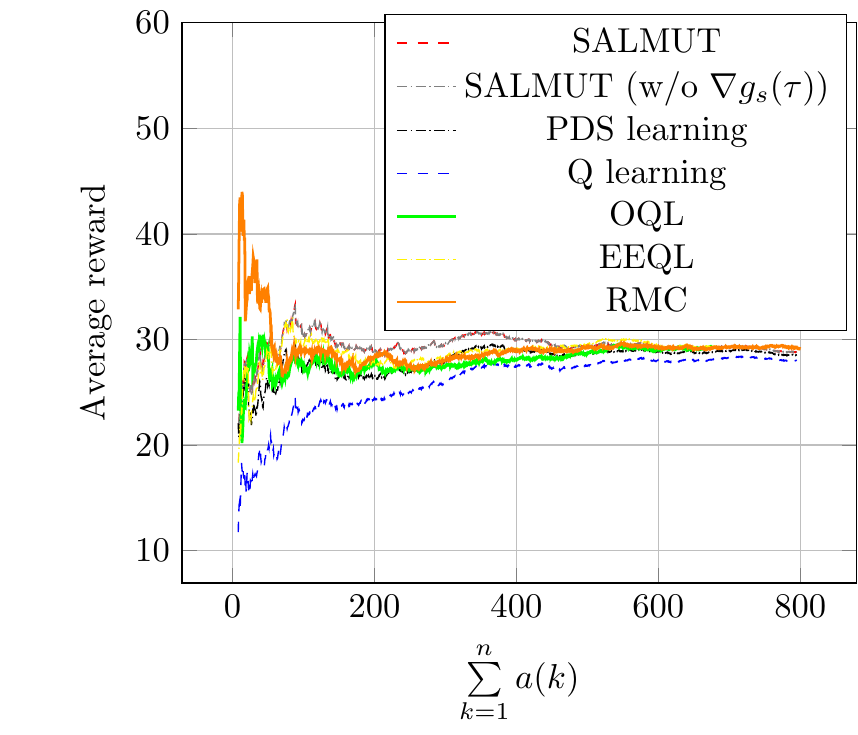}
        \caption{}
        \label{fig:cost_2_step}
        \end{subfigure}

\caption{Plot of average reward vs.  ($\sum_{k=1}^{n} a(k)$) for different algorithms ((a) $\mu=4 s^{-1}$, (b) $\mu=2 s^{-1}$).}
\end{figure*}
\par In practical cases, 
%one may not wait till the actual convergence to happen. W
when the average reward of the system does not change much over a suitable window, we may conclude that stopping condition is met as
%. This translates into the fact that 
the obtained policy is close %neighborhood of 
to the optimal policy with a high probability. The choice of window size $\sum _{k=p}^{p+n} a(k)$ over $n$ is to eliminate the effect of diminishing step size affecting the convergence behavior. 
We choose a window size of $50$
%We observe the ratio of maximum and minimum average rewards over a window size of $50$ 
and stop when the ratio of maximum and minimum average rewards exceeds $0.95$. Fig. \ref{fig:cost_1_step} (\ref{fig:cost_2_step}) reveals that practical convergences for Q-learning, PDS learning and SALMUT algorithms
are achieved in 1180 (1180), 580 (1180) and 426 (580) iterations, respectively. 
Since unlike PDS learning, Q-learning is associated with exploration mechanism, 
%the convergence of 
PDS learning converges faster than Q-learning. However, SALMUT converges faster than both Q-learning and PDS learning algorithms since it operates on a smaller policy space (threshold policies only). %As observed in Fig. \ref{fig:cost_1}, Q-learning, PDS learning and SALMUT algorithms take approximately 3000, 2000 and 1500 iterations for convergence, which translate into 1502, 1008 and 779 s, respectively. 
%On the other hand, SALMUT algorithm converges only in 1500 iterations (779 s). 
%Similarly, in Fig. \ref{fig:cost_2}, the convergence time reduces from 1540 s (3000 iterations in Q-learning and PDS learning) to 509 s (1000 iterations in SALMUT). 
%These correspond to approximately 3000 , 3000 and 1000 iterations, respectively.
Due to efficient exploration, the performances of  OQL \cite{wei2019model} and EEQL \cite{jafarnia2020model} are better than that of Q-learning and slightly worse than that of SALMUT. 
%However, %practical implementations of
%OQL and EEQL are computationally expensive than Q-learning as the confidence interval is calculated at every iteration. %Fig. \ref{fig:cost_1} and \ref{fig:cost_2} indicate that the convergence behavior of EEQL and OQL are slightly worse than that of SALMUT. 
%We also compare the performance with MH-Q-learning and MH-PDS learning algorithms. However, 
Since for a given sample path, the sequence of actions are same, the performance of MH-Q-learning (MH-PDS learning) is identical to that of Q-learning (PDS learning). The performance of adaptive approximation learning algorithm \cite{fu2012structure} is identical to that of PDS learning as batch update of PDSs is not possible. Therefore, we do not show their performances in the plot. The convergence behavior of Renewal Monte Carlo (RMC) \cite{chakravorty2019remote,subramanian2019renewal} algorithm (with discount factor=0.999) is slightly worse than that of SALMUT. Since the sampled Q-learning algorithm \cite{liu2020sampled} provides a near-optimal policy after a finite number of visits to a set of state-action pairs, the convergence behavior may not be comparable. 
%Similarly, 
%in Fig. \ref{fig:cost_2_step},
%Q-learning, PDS learning and SALMUT algorithms 
%they converge in approximately 1180, 1180 and 580 iterations, respectively.
\section{Conclusions \& Future Directions}\label{sec:conclude}
%In this paper, we have considered the optimal control problem in a multi-\textcolor{blue}{event stochastic} system. \textcolor{blue}{Under the} existence of a threshold-based optimal policy where the threshold for the $i^{\rm {th}}$ \textcolor{blue}{event} is a non-increasing function of $i$,
In this paper, we have proposed an RL algorithm which exploits the ordered multi-threshold nature of the optimal policy. %Since the proposed algorithm 
%operates only on the set of ordered threshold policies, the convergence behavior is faster than traditional RL algorithms. 
The convergence of the proposed algorithm to the globally optimal threshold vector is established. 
%Apart from gain in convergence speed, 
The proposed scheme provides improvements in storage and computational complexities over traditional RL algorithms. Simulation results establish the improvement in convergence behavior with respect to state-of-the-art RL schemes.
In future, this can be extended to develop RL algorithms for constrained MDP problems
%. Usually CMDP is associated with 
%using a two-timescale approach \cite{borkar2008stochastic} where the 
by updating the Lagrange Multiplier (LM) in a slower timescale \cite{borkar2008stochastic} than that of value functions. 
%Structure-awareness may introduce a third timescale for the update of the threshold parameter. Alternatively, 
The LM and the threshold parameter can be updated in the same slower timescale without requiring a third timescale as they are independent of each other. Another possible future direction is to develop RL algorithms 
for restless bandits such as \cite{borkar2018reinforcement} since threshold policies often translate into index-based policies.  
\appendices
%\section{}\label{app:b}
\section{Proof of Lemma \ref{lemma:a}}\label{app:a}
Proof techniques are similar to those of our earlier work \cite{roy2019structure}.
The optimality equation for value function is
\begin{equation*}
\begin{split}
&    \bar{V}(s,i)=\max_{a \in \mathcal{A}}[ R_i\mathbbm{1}_{\{a=A_2\}}+\sum_{s'}p(s,s',a)V(s')]-h(s).
\end{split}
\end{equation*}

%\begin{equation*}
%\begin{split}
%    &V(s)= \sum_{i=1}^N \lambda_i \max\{R_i+V(s+1),V(s)\}\\&+\min\{s,m\} \mu V((s-1)^+)+(1-v(s))V(s)-h(s),
%\end{split}
%\end{equation*}
%where $x^+=\max\{0,x\}$.\par 
In Value Iteration Algorithm (VIA), let the value function of state $s$ at $n^{\rm{th}}$ iteration be denoted by $v_n(s)$. We start with $v_0(s)=0, \ \forall s \in \mathcal{S}$. 
Therefore, $v_0(s+1)-v_0(s)=0$. 
%is a non-increasing function of $s$. 
Since (using definition of $V(s)$)
\begin{equation}\label{eq:app_1}
\begin{split}
&    v_{n+1}(s)=\sum_{i=1}^N \lambda_i \max\{R_i+v_n(s+1),v_n(s)\}\\&+\min\{s,m\}\mu v_n((s-1)^+)+(1-v(s))v_n(s)-h(s),
\end{split}
\end{equation}
and $h(s+1)-h(s)$ is increasing in $s$,
$v_1(s+1)-v_1(s)$ is decreasing in $s$.
Let us assume that $v_n(s+1)-v_n(s)$ is a decreasing function of $s$. We require to prove that $v_{n+1}(s+1)-v_{n+1}(s)$ is a decreasing function of $s$. Since $\lim \limits_{n\to \infty} v_n(s)=V(s)$, this implies the lemma.\par
We define $\hat{v}_{i,n+1}(s,a)$ and $\hat{v}_{i,n+1}(s),$ $\forall i \in \{1,2, \ldots, N\}$ as
 \begin{equation*}
\hat{v}_{i,n+1}(s,a)=
 \begin{cases}
      v_n(s) ,&a=A_1,\\
 %     (r(i)+V_n(i+1)),&a=A_2.\\
      R_i+v_n(s+1),&a=A_2.\\
\end{cases}
\end{equation*}
and $\hat{v}_{i,n+1}(s)=\max \limits_{a \in \mathcal{A}}\hat{v}_{i,n+1}(s,a)$.
Also, we define \\$Dv_{n}(s)=v_n(s+1)-v_n(s)$ and $D\hat{v}_{i,n}(s,a)=\hat{v}_{i,n}(s+1,a)-\hat{v}_{i,n}(s,a)$. Hence, we have,
 \begin{equation*}
D\hat{v}_{i,n+1}(s,a)=
 \begin{cases}
      Dv_n(s) ,&a=A_1,\\
 %     (r(i)+V_n(i+1)),&a=A_2.\\
      Dv_n(s+1),&a=A_2,\\
\end{cases}
\end{equation*}
and
 \begin{equation*}
D^2\hat{v}_{i,n+1}(s,a)=
 \begin{cases}
     D^2v_n(s) ,&a=A_1,\\
 %     (r(i)+V_n(i+1)),&a=A_2.\\
      D^2v_n(s+1),&a=A_2.\\
\end{cases}
\end{equation*}
Since $v_n(s+1)-v_n(s)$ is a decreasing function of $s$, $\hat{v}_{i,n+1}(s+1,a)-\hat{v}_{i,n+1}(s,a)$ is decreasing in $s$, $\forall i\in \{1,2,\ldots N\}$ and $\forall a \in \mathcal{A}$ . Let the maximizing actions for the admission of a class-$i$ customer in states $(s+2)$ and $s$ be denoted by $a_{i,1}\in \mathcal{A}$ and $a_{i,2}\in \mathcal{A}$, respectively. Therefore,
\begin{equation*}
\begin{split}
&2\hat{v}_{i,n+1}(s+1) \ge \hat{v}_{i,n+1}(s+1,a_{i,1})+\hat{v}_{i,n+1}(s+1,a_{i,2})\\&
=\hat{v}_{i,n+1}(s+2,a_{i,1})+\hat{v}_{i,n+1}(s,a_{i,2})+D\hat{v}_{i,n+1}(s,a_{i,2})\\&-D\hat{v}_{i,n+1}(s+1,a_{i,1}).
\end{split}
\end{equation*}
Let us denote $Z=D\hat{v}_{i,n+1}(s,a_{i,2})-D\hat{v}_{i,n+1}(s+1,a_{i,1})$. To prove that 
$\hat{v}_{i,n+1}(s+1)-\hat{v}_{i,n+1}(s)$ is non-increasing in $s$, we need to prove that 
$Z \ge 0$. We consider the following cases. 
\begin{itemize}
    \item $a_{i,1}=a_{i,2}=A_1$\\
    $Z=Dv_n(s)-Dv_n(s+1)=-D^2v_n(s)\ge 0$.
    \item $a_{i,1}=A_1, a_{i,2}=A_2$\\
    $Z=Dv_n(s+1)-Dv_n(s+1)= 0.$
        \item $a_{i,1}=a_{i,2}=A_2$\\
    $Z=Dv_n(s+1)-Dv_n(s+2)=-D^2v_n(s+1)\ge 0$.
    \item $a_{i,1}=A_2, a_{i,2}=A_1$\\
    $Z=Dv_n(s)-Dv_n(s+2)= -D^2v_n(s)-D^2v_n(s+1) \\
    \ge 0.$
\end{itemize}
To analyze the difference of second and third terms in Equation (\ref{eq:app_1}) corresponding to states $(s+1)$ and $s$, we consider two cases.
\begin{itemize}
    \item $s\ge m$:
    Both $m\mu(v_n(s)^+-v_n(s-1)^+)$ and $(1-\sum \limits_{i=1}^N \lambda_i-m \mu)(v_n(s+1)-v_n(s))$ are decreasing in $s$.
    \item $s<m$:
    The difference is equal to 
    \begin{equation*}
        s \mu (v_n(s)^+-v_n(s-1)^+)+(1-v(s+1))(v_n(s+1)-v_n(s)),
    \end{equation*}
    which is decreasing in $s$.
\end{itemize}
Since $h(s+1)-h(s)$ is increasing in $s$,
this proves that $v_n(s+1)-v_n(s)$ is decreasing in $s$. Hence,
$V(s+1)-V(s)$ is decreasing in $s$.

\section{Proof of Theorem \ref{theo:2}}\label{app:b}
%\textbf{Proof of Theorem \ref{theo:2}:} 
%The proof methodologies are based on the
We adopt the approach of viewing SA algorithms as a noisy discretization of a limiting Ordinary Differential Equation (ODE), similar to \cite{roy2019structure,roy2019low}. Step size parameters are viewed as discrete time steps. %Linearly interpolated iterates closely follow the trajectory of the ODE. 
Standard assumptions on step sizes %(viz., Equations (\ref{eq:stepsize_a}) and (\ref{eq:stepsize_b})) 
ensure that the errors due to noise and discretization are negligible asymptotically. Therefore, the iterates closely follow the trajectory of the ODE and ensure a.s. convergence to the globally asymptotically stable equilibrium. 
Using the two timescale approach \cite{borkar2008stochastic}, we consider 
(\ref{eq:primal_1}) for a fixed  $\boldsymbol{\tau}$. Let $M_1: \mathcal{R}^{|\mathcal{S}|} \to \mathcal{R}^{|\mathcal{S}|}$ be the following map 
%\begin{equation}\label{eq:map_1}
%\begin{split}
%M_1(s)=&\sum \limits_{s'}P_{ss'}(\boldsymbol{\tau})[\sum_{i=0}^N \frac{\lambda_{i}(s)}{v(s)} R_i\mathbbm{1}_{\{a=A_2\}}-h(s)\\&+V_n(s',\boldsymbol{\tau})]-V_n(s^*,\boldsymbol{\tau}).    \end{split}
%\end{equation}}
\begin{equation}\label{eq:map_1}
\begin{split}
M_1(x)=&\sum \limits_{s'}P_{ss'}(\boldsymbol{\tau})[\sum_{i=0}^N \frac{\lambda_{i}(s)}{v(s)} R_i\mathbbm{1}_{\{a=A_2\}}-h(s)\\&+x(s')]-x(s^*), x \in \mathcal{R}^{|\mathcal{S}|}.
\end{split}
\end{equation}

Note that the knowledge of $P_{ss'}(\boldsymbol{\tau})$ and $ \lambda_{i}(s)$ are not required for the algorithm and is only required for analysis. For a fixed $\boldsymbol{\tau}$, 
 (\ref{eq:primal_1}) tracks the limiting ODE
%\begin{equation}\label{eq:ode_1}
$\dot{V}(t)=M_1(V(t))-V(t)$.    
%\end{equation}
$V(t)$ converges to the fixed point of $M_1(.)$ (determined using $M_1(V)=V$)\cite{konda1999actor} which is the asymptotically stable equilibrium of the ODE, as $t \to \infty$. Analogous methodologies are adopted in 
\cite{abounadi2001learning,konda1999actor}. Note that the approximation described in (\ref{eq:sigmoid}) does not impact the convergence
argument since the fixed point of $M_1(.)$ remains the same.
Next we establish that the value function and threshold vector iterates are bounded.
\begin{lemma}
The threshold vector and value function iterates 
are bounded a.s.
\end{lemma}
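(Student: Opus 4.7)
The plan is to handle the two assertions separately, since they have very different flavors. Boundedness of the threshold iterates is essentially a free consequence of the algorithm's construction, while boundedness of the value function iterates requires invoking a standard stochastic approximation stability result.

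First I would dispose of the threshold vector. The update rule (\ref{eq:dual_1}) applies the projection $\Omega_i$ at every step to the $i$-th component, and when the $i$-th coordinate is modified, all coordinates $i' > i$ are also projected onto $[0,\tau_n(i-1)]$. A short induction on $i$ shows that if $0 \le \tau_n(0) \le W$ and $0 \le \tau_n(i) \le \tau_n(i-1)$ for all $i \ge 1$ at iteration $n$, the same holds at iteration $n+1$; the initialization $\boldsymbol{\tau}_0 = \vec 0$ satisfies the invariant. Consequently $\boldsymbol{\tau}_n$ lies deterministically (hence a.s.) in the compact simplex $\{\boldsymbol{\tau}: 0 \le \tau(N) \le \ldots \le \tau(0) \le W\}$ for every $n$.

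For the value function iterates I would appeal to the Borkar--Meyn stability theorem (Chapter~3 of \cite{borkar2008stochastic}). Rewrite (\ref{eq:primal_1}) in the canonical form $V_{n+1} = V_n + a(\gamma(s,n))\bigl(F(V_n,\boldsymbol{\tau}_n;\xi_n) - V_n + M_{n+1}\bigr)$, where $\xi_n$ is the tuple of sampled next state and event, $F$ represents the one-step lookahead, and $M_{n+1}$ is a martingale difference noise. Since the threshold iterates were already shown to live in a compact set and Assumption \ref{ass:1} together with the sigmoid approximation (\ref{eq:sigmoid}) make $P_{ss'}(\boldsymbol{\tau})$ and $g_s(\boldsymbol{\tau})$ uniformly bounded and Lipschitz in $\boldsymbol{\tau}$, the conditional expectation of $F$ is Lipschitz in $V$ and the noise has bounded conditional variance. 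The associated ODE is $\dot V(t) = M_1(V(t)) - V(t)$ with $M_1$ as in (\ref{eq:map_1}), and its ``scaling at infinity'' $M_1^{\infty}(x)(s) = \sum_{s'} P_{ss'}(\boldsymbol{\tau}) x(s') - x(s^*)$ corresponds to the ODE $\dot x(t) = M_1^{\infty}(x(t)) - x(t)$. Because $x \mapsto \sum_{s'} P_{ss'}(\boldsymbol{\tau}) x(s')$ is a sup-norm non-expansion and the anchoring term $-x(s^*)$ forces the $s^*$-coordinate towards $0$, the origin is the unique globally asymptotically stable equilibrium of this scaled ODE; this is the standard argument that underlies relative value iteration schemes \cite{abounadi2001learning}. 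The Borkar--Meyn theorem then gives $\sup_n \|V_n\|_{\infty} < \infty$ a.s.

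The main obstacle I anticipate is verifying the stability of the scaled ODE uniformly in the quasi-static parameter $\boldsymbol{\tau}$, since in the two-timescale scheme the map $M_1$ itself varies with $n$. This is handled by noting that $\boldsymbol{\tau}$ ranges over a compact set and the dependence of $M_1$ on $\boldsymbol{\tau}$ is continuous, so the non-expansion constant in sup-norm is uniform and a single Lyapunov function (e.g.\ $\|x\|_\infty^2$ restricted to the subspace annihilating $x(s^*)$) works for every admissible $\boldsymbol{\tau}$; this is enough to apply the Borkar--Meyn estimate along the realized sample path of $\{\boldsymbol{\tau}_n\}$. The remaining per-component clock argument handled by $\gamma(s,n)$ is routine given the asynchronous stochastic approximation framework already used in \cite{roy2019structure,roy2019low}.
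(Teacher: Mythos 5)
Your proposal is correct and follows essentially the same route as the paper: threshold boundedness is immediate from the projections in (\ref{eq:dual_1}), and value-function boundedness comes from the Borkar--Meyn criterion applied to the scaled limit map (your $M_1^{\infty}$ is exactly the paper's $M_0$ in (\ref{eq:map_2})), whose ODE has the origin as its globally asymptotically stable equilibrium. You supply somewhat more detail than the paper on why the scaled ODE is stable and on uniformity over the compact range of $\boldsymbol{\tau}$, but the underlying argument is the same.
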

\begin{proof}
Let $M_0: \mathcal{R}^{|\mathcal{S}|} \to \mathcal{R}^{|\mathcal{S}|}$ be the following map 
%\begin{equation}\label{eq:map_2}
%    M_0(s)=\sum \limits_{s'}P_{ss'}(\boldsymbol{\tau})V_n(s',\boldsymbol{\tau})-V_n(s^*,\boldsymbol{\tau}).    
%\end{equation}
\begin{equation}\label{eq:map_2}
    M_0(x)=\sum \limits_{s'}P_{ss'}(\boldsymbol{\tau})x(s')-x(s^*), x\in \mathcal{R}^{|\mathcal{S}|}.    
\end{equation}

Clearly, if the reward and cost functions are zero,  (\ref{eq:map_1}) is same as  (\ref{eq:map_2}). Also, $\lim_{b \to \infty}\frac{M_1(bV)}{b}=M_0(V)$. The globally asymptotically stable equilibrium of the ODE $\dot{V}(t)=M_0(V(t))-V(t)$ which is a scaled limit of ODE $\dot{V}(t)=M_1(V(t))-V(t)$,
%(\ref{eq:ode_1})
%\begin{equation}\label{eq:ode_2}
%    \dot{V}(t)=M_0(V(t))-V(t)
%\end{equation}
is the origin. Boundedness of value functions and threshold vector iterates follow from \cite{borkar2000ode}
%Boundedness of threshold vector iterates follows 
and (\ref{eq:dual_1}), respectively. 
\end{proof}
%In this proof, 
%we have considered a scaled ODE which approximately follows the original ODE if the value functions become unbounded along a subsequence. 
%Since the origin is the globally asymptotically stable 
%equilibrium of the scaled ODE, the scaled ODE must return to the origin. Hence, the original value function iterates also must move towards a bounded set, thus guaranteeing its stability.
%of value function iterates.
\begin{lemma}
$V_n-V^{\boldsymbol{\tau}_n}\to 0$, where $V^{\boldsymbol{\tau}_n}$ is the value function of states for $\boldsymbol{\tau}=\boldsymbol{\tau}_n$
a.s.
\end{lemma}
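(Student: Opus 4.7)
The plan is to treat (\ref{eq:primal_1}) as the fast-timescale component of an asynchronous two-timescale stochastic approximation, with $\boldsymbol{\tau}_n$ playing the role of a slowly varying exogenous parameter, and to invoke the standard ``leader--follower'' argument \cite[Chapter~6]{borkar2008stochastic}. Because $b(n)/a(n)\to 0$, on the $a(\cdot)$ timescale the threshold vector appears quasi-static, so that the $V$-iterates should track, for each $n$, the fixed point $V^{\boldsymbol{\tau}_n}$ of the map $M_1$ in (\ref{eq:map_1}) defined with $\boldsymbol{\tau}=\boldsymbol{\tau}_n$ held frozen.

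First, I would rewrite (\ref{eq:primal_1}) in the canonical SA form
\[
V_{n+1}(s)=V_n(s)+a(\gamma(s,n))\mathbbm{1}_{\{S_n=s\}}\bigl[(M_1^{\boldsymbol{\tau}_n}(V_n))(s)-V_n(s)+M_{n+1}(s)\bigr],
\]
where $M_{n+1}$ is a martingale difference sequence with respect to the natural filtration (absorbing the replacement of the conditional expectation $\sum_{s'}P_{ss'}(\boldsymbol{\tau}_n)(\cdots)$ by the sampled next-state contribution and the sampled event $i$). Assumption \ref{ass:1} together with boundedness of the iterates (already established in the preceding lemma) gives the required $L^2$-boundedness of $M_{n+1}$. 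The step-size conditions and the local-clock counter $\gamma(s,n)$ supply the ingredients for asynchronous SA \cite[Chapter~7]{borkar2008stochastic}; in particular, since the underlying sampled chain visits every state infinitely often with frequencies bounded away from zero (the chain is unichain and zero is reachable from every state), the usual ``comparable update frequencies'' hypothesis is satisfied.

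Next, for each fixed $\boldsymbol{\tau}$ the associated limit ODE $\dot V(t)=M_1^{\boldsymbol{\tau}}(V(t))-V(t)$ is a linear system whose drift has the unique fixed point $V^{\boldsymbol{\tau}}$ (obtained from the Poisson equation of the unichain Markov chain with kernel $P_{\cdot\cdot}(\boldsymbol{\tau})$, using the reference state $s^*$ as the normalizer); $V^{\boldsymbol{\tau}}$ is globally asymptotically stable by standard arguments for relative value iteration (see \cite{abounadi2001learning,konda1999actor}). A Lipschitz continuity of the map $\boldsymbol{\tau}\mapsto V^{\boldsymbol{\tau}}$ then follows from Assumption \ref{ass:1}, since the sigmoid approximation $f(s,\boldsymbol{\tau})$ makes $P_{ss'}(\boldsymbol{\tau})$ and $g_s(\boldsymbol{\tau})$ smooth in $\boldsymbol{\tau}$ with bounded derivatives, and the fixed point of a parameterized linear contraction-like system inherits this smoothness via the implicit function theorem.

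Combining these pieces, the two-timescale asynchronous SA theorem \cite[Theorems~2 and 3 of Chapter~6, Chapter~7]{borkar2008stochastic} applies: on the $a(\cdot)$ timescale the slow iterate $\boldsymbol{\tau}_n$ is effectively frozen because $b(n)/a(n)\to 0$, and the fast iterate $V_n$ therefore tracks $V^{\boldsymbol{\tau}_n}$, giving $V_n-V^{\boldsymbol{\tau}_n}\to 0$ a.s. The main technical obstacle I anticipate is the combination of asynchrony with the parameter dependence: verifying that the Lipschitz modulus of $\boldsymbol{\tau}\mapsto V^{\boldsymbol{\tau}}$, together with the vanishing per-iteration increment $b(n)|\nabla f|$ in (\ref{eq:dual_1}), is enough to guarantee that the drift between the fast-timescale target $V^{\boldsymbol{\tau}_n}$ and the trajectory of the tracking ODE is negligible at the sparse visit times of individual states; this is handled by interleaving the boundedness lemma with a uniform Lipschitz estimate and invoking the asynchronous tracking lemma of \cite[Chapter~7]{borkar2008stochastic}.
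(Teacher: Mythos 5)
Your proposal is correct and follows essentially the same route as the paper: both treat $\boldsymbol{\tau}_n$ as quasi-static on the fast timescale (since $b(n)/a(n)\to 0$, the slow iterate satisfies $\boldsymbol{\tau}_{n+1}=\boldsymbol{\tau}_n+o(a(n))$, giving the limiting ODE pair $\dot V=M_1(V)-V$, $\dot{\boldsymbol{\tau}}=0$) and then invoke the standard two-timescale tracking result so that $V_n$ converges to the fixed point $V^{\boldsymbol{\tau}_n}$. Your write-up simply makes explicit several verification steps (martingale-difference noise, asynchronous update frequencies, Lipschitz dependence of the fixed point on $\boldsymbol{\tau}$) that the paper delegates to the cited references.
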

\begin{proof}
Since the threshold vector iterates are updated in a slower timescale, value function iterates in the faster timescale treat the threshold vector iterates 
as fixed. Therefore, iterations for $\boldsymbol{\tau}$ are
%\begin{equation*}
$\boldsymbol{\tau}_{n+1}=\boldsymbol{\tau}_n+\gamma(n)$,    
%\end{equation*}
where $\gamma(n)=O(b(n))=o(a(n))$. Therefore, the limiting ODEs for value function and threshold vector iterates are $\dot{V}(t)=M_1(V(t))-V(t)$ and $\dot{\boldsymbol{\tau}}(t)=0$, respectively. It is sufficient to consider the ODE $\dot{V}(t)=M_1(V(t))-V(t)$ alone for a fixed $\boldsymbol{\tau}$ because $\dot{\boldsymbol{\tau}}(t)=0$. The rest of the proof is analogous to that of \cite{borkar2005actor}. 
\end{proof}
%\begin{assumption}\label{ass:unimod}
For the time being assume that $\sigma(\boldsymbol{\tau})$ is unimodal in $\boldsymbol{\tau}$. This is proved later.
%\end{assumption}
The lemmas presented next 
establish that 
%the unimodality of the average reward with respect to $\boldsymbol{\tau}$. As a result, 
threshold vector iterates $\boldsymbol{\tau}_n$ converge to the optimal threshold vector $\boldsymbol{\tau}^*$ and hence, $(V_n,\boldsymbol{\tau}_n)$ converges to the optimal pair $(V,\boldsymbol{\tau}^* )$.
%\begin{lemma}\label{lemma:noninc}
%$v_n(s+1)-v_n(s)$ is non-increasing in $n$.
%\end{lemma}
%\begin{proof}
%Proof is given in Appendix \ref{app:c}.
%\end{proof}
%\begin{lemma}\label{lemma:unimodal}
%$\sigma(\boldsymbol{\tau})$ is unimodal in $\boldsymbol{\tau}$.
%\end{lemma}
%\begin{proof}
%Proof is described in Appendix \ref{app:d}.
%\end{proof}
\begin{lemma}
The threshold vector iterates $\boldsymbol{\tau}_n\to \boldsymbol{\tau}^*$ a.s.
\end{lemma}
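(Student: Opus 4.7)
The plan is to recognize (\ref{eq:dual_1}) as a projected stochastic gradient ascent on $\sigma(\boldsymbol{\tau})$ whose limiting ODE, under Lemma \ref{lemma:unimodal}, has $\boldsymbol{\tau}^*$ as its unique globally asymptotically stable equilibrium. By the two-timescale principle and the preceding lemma, on the slower timescale $b(\cdot)$ we may treat $V_n$ as equal to $V^{\boldsymbol{\tau}_n}$. After this substitution, conditioning on the natural filtration $\mathcal{F}_n$ and averaging over the i.i.d. randomizations $\alpha_n,\beta_n$ and over the state transition yields a conditional mean of the bracketed term in (\ref{eq:dual_1}) of the form
\begin{equation*}
\nabla f(s,\tau_n(i))\big[\nabla g_s(\tau_n(i)) + (P_0(\cdot|s)-P_1(\cdot|s))V^{\boldsymbol{\tau}_n}(\cdot)\big],
\end{equation*}
which, once combined with the SA averaging that replaces the empirical state-visit frequency by the stationary distribution $\pi(s,\boldsymbol{\tau}_n)$, reproduces exactly the $i$th component of the policy gradient in Proposition 1.

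Concretely, I would rewrite (\ref{eq:dual_1}) as $\boldsymbol{\tau}_{n+1}=\Omega[\boldsymbol{\tau}_n+b(n)(\nabla\sigma(\boldsymbol{\tau}_n)+M_{n+1}+\epsilon_n)]$, where $M_{n+1}$ is an $\mathcal{F}_n$-martingale difference sequence and $\epsilon_n\to 0$ a.s. Boundedness of $\|M_{n+1}\|^2$ follows from a.s. boundedness of $V_n$ (established previously), uniform boundedness of $\nabla f$ (a property of the sigmoid in (\ref{eq:sigmoid})), and boundedness of $\hat{h}_\beta$. The error $\epsilon_n$ collects (i) the gap $V_n-V^{\boldsymbol{\tau}_n}$, which vanishes by the previous lemma, and (ii) the difference between empirical and stationary averages over states and events, which vanishes by the standard SA averaging argument since the chain under any fixed $\boldsymbol{\tau}$ is unichain. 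The relevant step-size conditions $\sum a(n)=\infty,\sum a(n)^2<\infty$ and $b(n)/a(n)\to 0$ ensure the noise and timescale-separation errors are asymptotically negligible.

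I would then invoke the standard ODE limit for projected stochastic approximation (e.g.\ \cite[Chapter~5.4]{borkar2008stochastic}, or the Kushner--Clark lemma) to conclude that $\boldsymbol{\tau}_n$ converges a.s.\ to the set of equilibria of the projected gradient flow $\dot{\boldsymbol{\tau}}=\hat{\Omega}[\nabla\sigma(\boldsymbol{\tau})]$ on the ordered polytope $\mathcal{C}=\{\boldsymbol{\tau}:0\le\tau(N)\le\cdots\le\tau(0)\le W\}$. Unimodality of $\sigma$ on $\mathcal{C}$ (Lemma \ref{lemma:unimodal}) identifies $\boldsymbol{\tau}^*$ as the unique stationary point and the strict maximum; taking $-\sigma$ as a Lyapunov function shows $\{\boldsymbol{\tau}^*\}$ is the unique globally asymptotically stable equilibrium of the flow on $\mathcal{C}$, yielding $\boldsymbol{\tau}_n\to\boldsymbol{\tau}^*$ a.s.

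The hard part will be justifying that the per-component, per-event projections described in (\ref{eq:dual_1}) jointly implement a clean projection onto the fixed polytope $\mathcal{C}$, so that the standard projected-ODE machinery applies despite each $\Omega_i$ acting on a state-dependent interval $[0,\tau(i-1)]$. I would argue this by noting that after updating $\tau_n(i)$ and then sweeping $\Omega_{i'}$ over all $i'>i$, the resulting vector lies in $\mathcal{C}$ and agrees coordinate-wise with the metric projection of the unconstrained gradient step onto $\mathcal{C}$ (the higher-index components collapse down monotonically, preserving the ordering); components with $i'<i$ already satisfy the constraint since $\tau_n(i-1)$ is unchanged. Secondary technical points (the smoothing of the indicator by (\ref{eq:sigmoid}) not altering the equilibrium set since $\nabla f\neq 0$, and the requirement that every component be updated comparably often, which holds because events of each type arrive at a positive rate under the unichain property) are handled routinely.
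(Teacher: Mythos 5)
Your proposal follows essentially the same route as the paper's proof: identify the limiting ODE of (\ref{eq:dual_1}) as the (projected) gradient ascent $\dot{\boldsymbol{\tau}}=\nabla\sigma(\boldsymbol{\tau})$, note that the boundary/projection does not introduce spurious equilibria, and invoke the unimodality of $\sigma$ (Lemma \ref{lemma:unimodal}) to conclude that $\boldsymbol{\tau}^*$ is the unique globally asymptotically stable equilibrium, hence $\boldsymbol{\tau}_n\to\boldsymbol{\tau}^*$ a.s. The paper states this in three lines while you supply the standard supporting SA machinery (martingale noise bounds, timescale separation, treatment of the order-polytope projection), so the substance is the same, only more fully elaborated.
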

\begin{proof}
The limiting ODE
for (\ref{eq:dual_1}) is the gradient ascent scheme
%\begin{equation*}
    $\dot{\boldsymbol{\tau}}=\nabla \sigma(\boldsymbol{\tau})$.
%\end{equation*}
Note that the gradient points inwards at ${\tau}(.)=0$ and ${\tau}(.)=W$. %Assumption \ref{ass:unimod} 
Since $\sigma(\boldsymbol{\tau})$ is unimodal in $\boldsymbol{\tau}$,
there does not exist any local maximum except $\boldsymbol{\tau}^*$ which is the global maximum. This concludes the proof of the lemma. 
\end{proof}
\begin{remark}
If the unimodality of the average reward with respect to the threshold vector does not hold, then convergence to only a local maximum can be guaranteed.
\end{remark}
\section{}\label{app:c}
%\section{Proof of Lemmas \ref{lemma:noninc} and \ref{lemma:unimodal}}\label{app:c}
%\textbf{Proof of Lemma \ref{lemma:noninc}:}
\subsection{Proof of Lemma \ref{lemma:noninc}:}
\begin{proof}
Proof methodologies are similar to \cite{roy2019structure}.
%We know
\begin{equation}\label{eq:app_dec}
\begin{split}
    v_{n+1}(s)=&\sum_{i=1}^N \lambda_i \max\{R_i+v_n(s+1),v_n(s)\}+\\&\min\{s,m\}\mu v_n((s-1)^+)+(1-v(s))v_n(s)-h(s),
\end{split}
\end{equation}
and 
%\begin{equation*}
$\hat{v}_{i,n+1}(s)=\max \{v_n(s),R_i+v_n(s+1)\}.$
%\end{equation*}
%Let ${v'}_{n+1}(s)=v_n((s-1)^+)$.
We know, $Dv_n(s)=v_n(s+1)-v_n(s)$.
We use induction to prove that $Dv_n(s)$ is decreasing in $n$. If $n=0$, $v_0(s)=0$ and $Dv_0(s)=0$.
Using (\ref{eq:app_dec}), it is easy to see that $Dv_1(s)< Dv_0(s)$ as $h(s+1)> h(s)$.
Assuming that the claim holds for any $n$, i.e., $Dv_{n+1}(s)< Dv_n(s)$, we need to prove that $Dv_{n+2}(s)< Dv_{n+1}(s)$. 
To analyze the second and third terms in (\ref{eq:app_dec}), we consider two cases.\\
%\begin{itemize}
  %  \item 
(a)    $s\ge m$:
    $m\mu Dv_{n+1}((s-1)^+)+(1-\sum_{i=1}^N \lambda_i-m \mu)D v_{n+1}(s)$ is less than $m\mu D v_n((s-1)^+)+(1-\sum_{i=1}^N \lambda_i-m \mu)Dv_n(s)$.\\
%    \item 
(b)    $s<m$:
%    \begin{equation*}
        $s \mu D v_{n+1}((s-1)^+)+(1-v(s+1)) D v_{n+1}(s)$ is less than  $s \mu D v_{n}((s-1)^+)+(1-v(s+1)) D v_{n}(s)$. \\
%    \end{equation*}
%    which is non-increasing in $n$.
%\end{itemize}
We proceed to prove that $D\hat{v}_{i,n+2}(s)\le D\hat{v}_{i,n+1}(s)$. Let at $(n+2)^{\rm {th}}$ iteration, maximizing actions for the admission of class-$i$ customers in states $s$ and $(s+1)$ be denoted by $a_{i,1}\in \{A_1,A_2\}$ and $a_{i,2}\in \{A_1,A_2\}$, respectively. Let $b_{i,1},b_{i,2}\in \{A_1,A_2\}$
be the maximizing actions in states $s$ and $(s+1)$, respectively, at $(n+1)^{\rm {th}}$ iteration.
It is not possible to have $a_{i,2}=A_2$ and $b_{i,1}=A_1$. If $b_{i,1}=A_1$, then $Dv_n(s)\le -R_i$. Therefore, we must have $Dv_n(s+1)< -R_i$ (Using Lemma \ref{lemma:a}). If $a_{i,2}=A_2$, then $Dv_{n+1}(s+1)\ge -R_i$ which contradicts the inductive assumption. Therefore, we consider the remaining cases. Note that if the inequality holds for any $a_{i,1}$ and $b_{i,2}$ for given $a_{i,2}$ and $b_{i,1}$, then the maximizing actions will satisfy the inequality too.\\
% \begin{itemize}
 %   \item
 (a) $a_{i,2}=b_{i,1}=A_1$: We choose $a_{i,1}=b_{i,2}=A_1$ to get 
%    \begin{equation*}
     $D\hat{v}_{i,n+2}(s)- D\hat{v}_{i,n+1}(s)=D\hat{v}_{i,n+1}(s)- D\hat{v}_{i,n}(s)\le 0.$ \\
%     \end{equation*}
     %\item 
(b)     $a_{i,2}=b_{i,1}=A_2$: Proof is similar to the preceding case by choosing $a_{i,1}=b_{i,2}=A_2$.\\
% \item 
(c) $a_{i,2}=A_1, b_{i,1}=A_2$: Choose 
     $a_{i,1}=A_2$ and $b_{i,2}=A_1$.
      %   \begin{equation*}
      %   \begin{split}
     $D\hat{v}_{i,n+2}(s)- D\hat{v}_{i,n+1}(s)
     = v_{n+1}(s+1)-R_i-v_{n+1}(s+1)-v_n(s+1)+R_i+v_n(s+1)=0.$\\
%     \end{split}
 %    \end{equation*}
%\end{itemize}
Thus, $D\hat{v}_{i,n+2}(s)\le D\hat{v}_{i,n+1}(s)$. Since this holds for every $i$ and $h(s)$ is independent of $n$, this concludes the proof. 
\end{proof}
%\textbf{Proof of Lemma \ref{lemma:unimodal}:}
\subsection{Proof of Lemma \ref{lemma:unimodal}:}
%Proof technique is similar to that of  \cite{roy2019structure} with the exception that instead of multiple thresholds, a single threshold parameter is considered in \cite{roy2019structure}.
\begin{proof}
Proof idea is similar to that of \cite{roy2019low}.
We prove this lemma for $i^{\rm {th}}$ component of the threshold vector (viz., ${\tau}(i)$). 
If the optimal action for the admission of class-$i$
customers in state $s$ is $A_1$, then $V(s+1)-V(s)\le -R_i$. Since VIA converges to the optimal threshold vector $\boldsymbol{\tau}^*$ in finite time, $\exists N_0>0$ such that $\forall n \ge N_0, v_n(s+1)-v_n(s)\le -R_i, \forall s \ge {\tau}^*(i)$ and 
$v_n(s+1)-v_n(s)\ge -R_i, \forall s < {\tau}^*(i)$. Let $U_{i,n},n\ge 1$ be the optimal threshold for class-$i$
customers at $n^{\rm{th}}$ iteration of VIA.
Hence, $U_{i,n}=\min\{s\in \mathbb{N}_0: v_n(s+1)-v_n(s)\le -R_i\}$. If no values of $s$ satisfies the inequality, then $U_{i,n}=m+B$. Since $v_n(s+1)-v_n(s)$ is decreasing in $n$ (Lemma \ref{lemma:noninc}), $U_{i,n}$ is monotonically decreasing in $n$, and $\lim \limits_{n \to \infty} U_{i,n} ={\tau}^*(i)$.\par 
Consider a re-designed problem where for a given threshold vector $\boldsymbol{\tau}'$ such that ${\tau}^*(i)\le {\tau}'(i)\le m+B$, action $A_1$ is not allowed in any state $s<{\tau}'(i)$. Note that Lemma $\ref{lemma:noninc}$ holds for this re-designed problem also. Let $n_{{\tau}'(i)}$  
be the first iteration of VIA when the threshold reduces to ${\tau}'(i)$. The value function iterates for the original and re-designed problem are same for $n\le n_{{\tau}'(i)}$ because in the original problem also $A_1$ is never chosen as the optimal action in states $s<{\tau}'(i)$ at these iterations. Hence, $n_{{\tau}'(i)}$ must be finite and the inequality     $v_n({\tau}'(i)+1)-v_n({\tau}'(i))\le -R_i$ is true for both the problems after $n_{{\tau}'(i)}$ iterations.
%\begin{equation*}
%    v_n(\boldsymbol{\tau}'(i)+1)-v_n(\boldsymbol{\tau}'(i))\le -R_i.
%\end{equation*}
Using Lemma \ref{lemma:noninc}, this inequality holds $\forall n \ge {\tau}'(i)$. Therefore, in the re-designed problem, $U_{i,n}$ converges to ${\tau}'(i)$. Thus, the threshold policy with ${\tau}'(i)$ is superior than that with ${\tau}'(i)+1$. Since this holds for arbitrary choice of ${\tau}'(i)$,
average reward monotonically decreases with ${\tau}'(i)$, $\forall {\tau}'(i)>{\tau}^*(i)$.\par 
If we have $\sigma(\boldsymbol{\tau})\ge \sigma(\boldsymbol{\tau}+e_i)$ (where $e_i \in \mathbb{R}^N$ is a vector with all zeros except the $i^{\rm{th}}$ element being `1'), then we must have 
${\tau}(i)\ge {\tau}^*(i)$. Therefore, $\sigma(\boldsymbol{\tau}+e_i)\ge \sigma(\boldsymbol{\tau}+2e_i)$. Hence, the average reward is unimodal in ${\tau}(i)$.
Since the proof holds for any $i$, this concludes the proof of the lemma. 
\end{proof}
\section*{Acknowledgment}
Works of Arghyadip Roy, Abhay Karandikar and Prasanna Chaporkar are supported   by   the    Ministry of Electronics and Information Technology (MeitY), Government of India  as part of ``5G Research and Building Next Gen Solutions for Indian Market'' project. 
Work of Vivek Borkar is supported by the CEFIPRA grant for ``Machine Learning for Network Analytics'' and a J. C. Bose Fellowship.
\bibliography{struc} 

% Generated by IEEEtran.bst, version: 1.14 (2015/08/26)
\begin{thebibliography}{10}
\providecommand{\url}[1]{#1}
\csname url@samestyle\endcsname
\providecommand{\newblock}{\relax}
\providecommand{\bibinfo}[2]{#2}
\providecommand{\BIBentrySTDinterwordspacing}{\spaceskip=0pt\relax}
\providecommand{\BIBentryALTinterwordstretchfactor}{4}
\providecommand{\BIBentryALTinterwordspacing}{\spaceskip=\fontdimen2\font plus
\BIBentryALTinterwordstretchfactor\fontdimen3\font minus
  \fontdimen4\font\relax}
\providecommand{\BIBforeignlanguage}[2]{{%
\expandafter\ifx\csname l@#1\endcsname\relax
\typeout{** WARNING: IEEEtran.bst: No hyphenation pattern has been}%
\typeout{** loaded for the language `#1'. Using the pattern for}%
\typeout{** the default language instead.}%
\else
\language=\csname l@#1\endcsname
\fi
#2}}
\providecommand{\BIBdecl}{\relax}
\BIBdecl

\bibitem{roy2019structure}
A.~Roy, V.~Borkar, A.~Karandikar, and P.~Chaporkar, ``A structure-aware online
  learning algorithm for {M}arkov decision processes,'' in \emph{ACM EAI
  VALUETOOLS}, 2019, pp. 71--78.

\bibitem{puterman2014markov}
M.~L. Puterman, \emph{Markov decision processes: discrete stochastic dynamic
  programming}.\hskip 1em plus 0.5em minus 0.4em\relax John Wiley \& Sons,
  2014.

\bibitem{ccil2009effects}
E.~B. {\c{C}}il, E.~L. {\"O}rmeci, and F.~Karaesmen, ``Effects of system
  parameters on the optimal policy structure in a class of queueing control
  problems,'' \emph{Queueing Systems}, vol.~61, no.~4, pp. 273--304, 2009.

\bibitem{powell2007approximate}
W.~B. Powell, \emph{Approximate Dynamic Programming: Solving the curses of
  dimensionality}.\hskip 1em plus 0.5em minus 0.4em\relax John Wiley \& Sons,
  2007.

\bibitem{bellman1957dynamic}
R.~Bellman, ``Dynamic programming,'' \emph{Princeton University Press}, 1957.

\bibitem{sutton1998reinforcement}
R.~S. Sutton and A.~G. Barto, \emph{Reinforcement learning: An
  introduction}.\hskip 1em plus 0.5em minus 0.4em\relax MIT press Cambridge,
  1998.

\bibitem{watkins1992q}
C.~J. Watkins and P.~Dayan, ``Q-learning,'' \emph{Machine learning}, vol.~8,
  no. 3-4, pp. 279--292, 1992.

\bibitem{jin2018q}
C.~Jin, Z.~Allen-Zhu, S.~Bubeck, and M.~I. Jordan, ``Is {Q}-learning provably
  efficient?'' in \emph{Advances in Neural Information Processing Systems},
  2018, pp. 4863--4873.

\bibitem{wei2019model}
C.-Y. Wei, M.~Jafarnia-Jahromi, H.~Luo, H.~Sharma, and R.~Jain, ``Model-free
  reinforcement learning in infinite-horizon average-reward {M}arkov decision
  processes,'' \emph{arXiv preprint arXiv:1910.07072}, 2019.

\bibitem{jafarnia2020model}
M.~Jafarnia-Jahromi, C.-Y. Wei, R.~Jain, and H.~Luo, ``A model-free learning
  algorithm for infinite-horizon average-reward {MDPs} with near-optimal
  regret,'' \emph{arXiv preprint arXiv:2006.04354}, 2020.

\bibitem{salodkar2008line}
N.~Salodkar, A.~Bhorkar, A.~Karandikar, and V.~S. Borkar, ``An on-line learning
  algorithm for energy efficient delay constrained scheduling over a fading
  channel,'' \emph{IEEE Journal on Selected Areas in Communications}, vol.~26,
  no.~4, pp. 732--742, 2008.

\bibitem{mastronarde2012joint}
N.~Mastronarde and M.~van~der Schaar, ``Joint physical-layer and system-level
  power management for delay-sensitive wireless communications,'' \emph{IEEE
  Transactions on Mobile Computing}, vol.~12, no.~4, pp. 694--709, 2012.

\bibitem{borkar2005actor}
V.~S. Borkar, ``An actor-critic algorithm for constrained {M}arkov decision
  processes,'' \emph{Systems \& control letters}, vol.~54, no.~3, pp. 207--213,
  2005.

\bibitem{bertsekas1995dynamic}
D.~P. Bertsekas, \emph{Dynamic programming and optimal control}.\hskip 1em plus
  0.5em minus 0.4em\relax Athena scientific Belmont, MA, 1995, vol.~1, no.~2.

\bibitem{agarwal2008structural}
M.~Agarwal, V.~S. Borkar, and A.~Karandikar, ``Structural properties of optimal
  transmission policies over a randomly varying channel,'' \emph{IEEE
  Transactions on Automatic Control}, vol.~53, no.~6, pp. 1476--1491, 2008.

\bibitem{smith2002structural}
J.~E. Smith and K.~F. McCardle, ``Structural properties of stochastic dynamic
  programs,'' \emph{Operations Research}, vol.~50, no.~5, pp. 796--809, 2002.

\bibitem{kunnumkal2008exploiting}
S.~Kunnumkal and H.~Topaloglu, ``Exploiting the structural properties of the
  underlying {M}arkov decision problem in the {Q}-learning algorithm,''
  \emph{INFORMS Journal on Computing}, vol.~20, no.~2, pp. 288--301, 2008.

\bibitem{fu2012structure}
F.~Fu and M.~van~der Schaar, ``Structure-aware stochastic control for
  transmission scheduling,'' \emph{IEEE Transactions on Vehicular Technology},
  vol.~61, no.~9, pp. 3931--3945, 2012.

\bibitem{ngo2010monotonicity}
M.~H. Ngo and V.~Krishnamurthy, ``Monotonicity of constrained optimal
  transmission policies in correlated fading channels with {ARQ}.'' \emph{IEEE
  Transactions on Signal Processing}, vol.~58, no.~1, pp. 438--451, 2010.

\bibitem{sharma2018accelerated}
N.~Sharma, N.~Mastronarde, and J.~Chakareski, ``Accelerated structure-aware
  reinforcement learning for delay-sensitive energy harvesting wireless
  sensors,'' \emph{arXiv preprint arXiv:1807.08315}, 2018.

\bibitem{chakravorty2019remote}
J.~Chakravorty and A.~Mahajan, ``Remote estimation over a packet-drop channel
  with markovian state,'' \emph{IEEE Transactions on Automatic Control},
  vol.~65, no.~5, pp. 2016--2031, 2019.

\bibitem{subramanian2019renewal}
J.~Subramanian and A.~Mahajan, ``Renewal monte carlo: Renewal theory-based
  reinforcement learning,'' \emph{IEEE Transactions on Automatic Control},
  vol.~65, no.~8, pp. 3663--3670, 2019.

\bibitem{borkar2008stochastic}
V.~S. Borkar, \emph{Stochastic approximation: A dynamical systems
  viewpoint}.\hskip 1em plus 0.5em minus 0.4em\relax Cambridge University
  Press, 2008.

\bibitem{liu2020sampled}
L.~Liu and U.~Mitra, ``On sampled reinforcement learning in wireless networks:
  Exploitation of policy structures,'' \emph{IEEE Transactions on
  Communications}, vol.~68, no.~5, pp. 2823--2837, 2020.

\bibitem{sinha2012optimal}
A.~Sinha and P.~Chaporkar, ``Optimal power allocation for a renewable energy
  source,'' in \emph{IEEE NCC}, 2012, pp. 1--5.

\bibitem{koole1998structural}
G.~Koole, ``Structural results for the control of queueing systems using
  event-based dynamic programming,'' \emph{Queueing systems}, vol.~30, no. 3-4,
  pp. 323--339, 1998.

\bibitem{brouns2006optimal}
G.~A. Brouns and J.~Van Der~Wal, ``Optimal threshold policies in a two-class
  preemptive priority queue with admission and termination control,''
  \emph{Queueing Systems}, vol.~54, no.~1, pp. 21--33, 2006.

\bibitem{ngo2009optimality}
M.~H. Ngo and V.~Krishnamurthy, ``Optimality of threshold policies for
  transmission scheduling in correlated fading channels,'' \emph{IEEE
  Transactions on Communications}, vol.~57, no.~8, 2009.

\bibitem{ccil2007structural}
E.~B. {\c{C}}il, E.~L. {\"O}rmeci, and F.~Karaesmen, ``Structural results on a
  batch acceptance problem for capacitated queues,'' \emph{Mathematical Methods
  of Operations Research}, vol.~66, no.~2, pp. 263--274, 2007.

\bibitem{roy2019low}
A.~Roy, V.~Borkar, P.~Chaporkar, and A.~Karandikar, ``Low complexity online
  radio access technology selection algorithm in {LTE-WiFi} hetnet,''
  \emph{IEEE Transactions on Mobile Computing}, vol.~19, no.~2, pp. 376--389,
  2019.

\bibitem{wolff1989stochastic}
R.~W. Wolff, \emph{Stochastic modeling and the theory of queues}.\hskip 1em
  plus 0.5em minus 0.4em\relax Pearson College Division, 1989.

\bibitem{kumar2012discrete}
\BIBentryALTinterwordspacing
A.~Kumar, ``Discrete event stochastic processes,'' \emph{Lecture Notes for
  Engineering Curriculum}, 2012. [Online]. Available:
  \url{https://ece.iisc.ac.in/~anurag/books/anurag/spqt.pdf}
\BIBentrySTDinterwordspacing

\bibitem{marbach2001simulation}
P.~Marbach and J.~N. Tsitsiklis, ``Simulation-based optimization of {M}arkov
  reward processes,'' \emph{IEEE Transactions on Automatic Control}, vol.~46,
  no.~2, pp. 191--209, 2001.

\bibitem{sutton2000policy}
R.~S. Sutton, D.~A. McAllester, S.~P. Singh, and Y.~Mansour, ``Policy gradient
  methods for reinforcement learning with function approximation,'' in
  \emph{Advances in neural information processing systems}, 2000, pp.
  1057--1063.

\bibitem{ha1997inventory}
A.~Y. Ha, ``Inventory rationing in a make-to-stock production system with
  several demand classes and lost sales,'' \emph{Management Science}, vol.~43,
  no.~8, pp. 1093--1103, 1997.

\bibitem{koole2007monotonicity}
G.~Koole, \emph{Monotonicity in Markov reward and decision chains: Theory and
  applications}.\hskip 1em plus 0.5em minus 0.4em\relax Now Publishers Inc,
  2007, vol.~1.

\bibitem{borkar2018reinforcement}
V.~S. Borkar and K.~Chadha, ``A reinforcement learning algorithm for restless
  bandits,'' in \emph{IEEE ICC}, 2018, pp. 89--94.

\bibitem{konda1999actor}
V.~R. Konda and V.~S. Borkar, ``Actor-critic-type learning algorithms for
  {M}arkov decision processes,'' \emph{SIAM Journal on control and
  Optimization}, vol.~38, no.~1, pp. 94--123, 1999.

\bibitem{abounadi2001learning}
J.~Abounadi, D.~Bertsekas, and V.~S. Borkar, ``Learning algorithms for {M}arkov
  decision processes with average cost,'' \emph{SIAM Journal on Control and
  Optimization}, vol.~40, no.~3, pp. 681--698, 2001.

\bibitem{borkar2000ode}
V.~S. Borkar and S.~P. Meyn, ``The {ODE} method for convergence of stochastic
  approximation and reinforcement learning,'' \emph{SIAM Journal on Control and
  Optimization}, vol.~38, no.~2, pp. 447--469, 2000.

\end{thebibliography}

\end{document}